\crefname{algorithm}{Algorithm}{Algorithms}
\def\eqref#1{equation~\ref{#1}}
\def\1{\bm{1}}
\def\rc{{\textnormal{c}}}
\def\rvw{{\mathbf{w}}}
\def\rvx{{\mathbf{x}}}
\def\rvy{{\mathbf{y}}}
\def\rvz{{\mathbf{z}}}
\DeclareMathAlphabet{\mathsfit}{\encodingdefault}{\sfdefault}{m}{sl}
\SetMathAlphabet{\mathsfit}{bold}{\encodingdefault}{\sfdefault}{bx}{n}
\def\gC{{\mathcal{C}}}
\def\sR{{\mathbb{R}}}
\newcommand{\E}{\mathbb{E}}
\DeclareMathOperator*{\argmin}{arg\,min}
\newcommand{\spara}[1]{\noindent\textbf{#1.}}
\newtheorem{lemma}{Lemma}
\newtheorem{theorem}{Theorem}
\newtheorem{remark}{Remark}
\newcommand{\MMD}{\mathrm{MMD}}
\title{Noise Consistency Training: A Native Approach for One-Step Generator in Learning Additional Controls}
\author{%
    Yihong Luo$^1$\thanks{Core contribution.}, \  Shuchen Xue$^2$\footnotemark[1], \ Tianyang Hu$^3$\footnotemark[2], \ Jing Tang$^{4,1}$\thanks{Corresponding authors.} \\
    \tt $^1$HKUST $^2$UCAS $^3$NUS $^4$HKUST(GZ)
}
\begin{document}
\maketitle

\begin{abstract}
The pursuit of efficient and controllable high-quality content generation remains a central challenge in artificial intelligence-generated content (AIGC).
While one-step generators, enabled by diffusion distillation techniques, offer excellent generation quality and computational efficiency, adapting them to new control conditions—such as structural constraints, semantic guidelines, or external inputs—poses a significant challenge. 
Conventional approaches often necessitate computationally expensive modifications to the base model and subsequent diffusion distillation. 
This paper introduces Noise Consistency Training (NCT), a novel and lightweight approach to directly integrate new control signals into pre-trained one-step generators without requiring access to original training images or retraining the base diffusion model. 
NCT operates by introducing an adapter module and employs a \textit{noise consistency loss} in the noise space of the generator. 
This loss aligns the adapted model's generation behavior across noises that are conditionally dependent to varying degrees, implicitly guiding it to adhere to the new control. 
Theoretically, this training objective can be understood as minimizing the distributional distance between the adapted generator and the conditional distribution induced by the new conditions.
NCT is modular, data-efficient, and easily deployable, relying only on the pre-trained one-step generator and a control signal model. Extensive experiments demonstrate that NCT achieves state-of-the-art controllable generation in a single forward pass, surpassing existing multi-step and distillation-based methods in both generation quality and computational efficiency. Code is available at \url{https://github.com/Luo-Yihong/NCT}.
\end{abstract}

\section{Introduction}

The pursuit of high-quality, efficient, and controllable generation has become a central theme in the advancement of artificial intelligence-generated content (AIGC). The ability to create diverse and realistic content is crucial for a wide range of applications, from art and entertainment to scientific visualization and data augmentation. Recent breakthroughs in diffusion models and their distillation techniques have led to the development of highly capable one-step generators~\citep{ho2020denoising,sohl2015deep,luo2023diff,yoso,yin2023one}. These models offer a compelling combination of generation quality and computational efficiency, significantly reducing the cost of content creation. Methods such as Consistency Training~\citep{consistency_model} and Inductive Moment Matching~\citep{zhou2025inductive} have further expanded the landscape of native few-step or even one-step generative models, providing new tools and perspectives for efficient generation. 

However, as AIGC applications continue to evolve, new scenarios are constantly emerging that demand models to adapt to novel conditions and controls. 
These conditions can take many forms, encompassing structural constraints (e.g., generating an image with specific edge arrangements), semantic guidelines (e.g., creating an image that adheres to a particular artistic style), and external factors such as user preferences or additional sensory inputs 
(e.g., generating an image based on a depth map). 
Integrating such controls effectively and efficiently is a critical challenge.

The conventional approach to incorporating controls into diffusion models often involves modifying the base model architecture and subsequently performing diffusion distillation to obtain a one-step student model~\cite{song2024sdxs}. 
This process, while effective, can be computationally expensive and time-intensive, requiring significant resources and development time. A more efficient alternative would be to extend the distillation pipeline to accommodate new controls directly, potentially bypassing the need for extensive retraining of the base diffusion model~\cite{luo2025adding}. However, even extending the distillation pipeline can still be a heavy undertaking, adding complexity and computational overhead. Therefore, the question of how to directly endow one-step generators with new controls in a lightweight and efficient manner remains a significant challenge.

In this paper, we answer this question by proposing Noise Consistency Training (NCT) --- a simple yet powerful approach that enables a pre-trained one-step generator to incorporate new conditioning signals without requiring access to training images or retraining the base model. 
NCT achieves this by introducing an adapter module that operates in the noise space of the pre-trained generator. 
Specifically, we define a noise-space consistency loss that aligns the generation behavior of the adapted model across different noise levels, implicitly guiding it to satisfy the new control signal. 
Besides, we employ a boundary loss ensuring that when given a condition already associated with input noise, the generation should remain the same as one-step uncontrollable generation. 
This can ensure the distribution of the adapter generator remains in the image domain rather than collapsing.
Theoretically, we demonstrate in Section \ref{sec:method} that this training objective can be understood as matching the adapted generator to the intractable conditional induced by a discriminative control model when the boundary loss is satisfied, effectively injecting the desired conditioning behavior. 

Our method is highly modular, data-efficient, and easy to deploy, requiring only the pre-trained one-step generator and a control signal model, without the need for full-scale diffusion retraining or access to the original training data. Extensive experiments across various control scenarios demonstrate that NCT achieves state-of-the-art controllable generation in a single forward pass, outperforming existing multi-step and distillation-based methods in both quality and computational efficiency.

\vspace{-2mm}
\section{Preliminary}
\vspace{-1mm}
\spara{Diffusion Models (DMs)} 
DMs~\citep{sohl2015deep, ho2020denoising} operate via a forward diffusion process that incrementally adds Gaussian noise to data $\rvx$ over $T$ timesteps. This process is defined as $q(\rvx_t|\rvx) \triangleq \mathcal{N}(\rvx_t; \alpha_t\rvx, \sigma_t^2\textbf{I})$, where $\alpha_t$ and $\sigma_t$ are hyperparameters dictating the noise schedule. The diffused samples are obtained via $\rvx_t = \alpha_t\rvx + \sigma_t \epsilon$, with $\epsilon \sim \mathcal{N}(\mathbf{0},\mathbf{I})$. The diffusion network, $\epsilon_\theta$ is trained by denoising: $\E_{\rvx,\epsilon,t} || \epsilon_\theta(\rvx_t,t) - \epsilon||_2^2$. Once trained, generating samples from DMs typically involves iteratively solving the corresponding diffusion stochastic differential equations (SDEs) or probability flow ordinary differential equations (PF-ODEs), a process that requires multiple evaluation steps.

\spara{ControlNet} Among other approaches for injecting conditions~\cite{mou2023t2i,ho2022classifier, bansal2024universal,ma2023elucidating, luo2025reward}, ControlNet~\cite{zhang2023adding} has emerged as a prominent and effective technique for augmenting pre-trained DMs with additional conditional controls. Given a pre-trained diffusion model $\epsilon_\theta$, ControlNet introduces an auxiliary network, parameterized by $\phi$. This network is trained by minimizing a conditional denoising loss $L(\phi)$ to inject the desired controls:
\begin{equation}
L(\phi) = \E_{\rvx,\epsilon,t}|| \epsilon - \epsilon_{\theta,\phi}(\rvx_t,\rc) ||_2^2.
\end{equation}
After training, ControlNet enables the integration of new controls into the pre-trained diffusion models.

\spara{Maximum Mean Discrepancy}  
Maximum Mean Discrepancy (MMD~\cite{gretton2012kernel}) between distribution $p(\rvx),q(\rvy)$ is an integral probability metric~\cite{muller1997integral}:
\begin{align}\label{eq:mmd}
\text{MMD}^2(p, q) &  = ||\E_\rvx [\psi(\rvx)] - \E_\rvy[\psi(\rvy)] ||^2,
\end{align}
where $\psi(\cdot)$ is a kernel function.

\spara{Diffusion Distillation} While significant advancements have been made in training-free acceleration methods for DMs~\cite{lu2023dpm, zhao2023unipc, xue2024accelerating, si2024freeu, ma2024surprising}, diffusion distillation remains a key strategy for achieving high-quality generation in very few steps. Broadly, these distillation methods follow two primary paradigms: 1) Trajectory distillation~\cite{luhman2021knowledge, on_distill, salimans2021progressive, song2023consistency, song2024improved, yan2024perflow}, which seeks to replicate the teacher model's ODE trajectories on an instance-by-instance basis. These methods can encounter difficulties with precise instance-level matching. 2) Distribution matching, often realized via score distillation~\cite{yin2023one, luo2023diff, sid, yoso}, which aims to align the output distributions of the student and teacher models using divergence metrics. 
Our work utilizes a pre-trained one-step generator, which itself is a product of diffusion distillation; however, the training of our proposed NCT method does not inherently require diffusion distillation.

\spara{Additional Controls for One-step Diffusion} 
The distillation of multi-step DMs into one-step generators, particularly through score distillation, is an established research avenue~\cite{luo2023diff,yin2023one,sid}. However, the challenge of efficiently incorporating new controls into these pre-trained one-step generators is less explored. CCM~\cite{xiao2024ccm}, for example, integrates consistency training with ControlNet, demonstrating reasonable performance with four generation steps. In contrast, our work aims to surpass standard ControlNet performance in most cases using merely a single step. Many successful score distillation techniques~\cite{luo2023diff,yin2023one,dmd2, luo2025tdm} rely on initializing the one-step student model with the weights of the teacher model. SDXS~\cite{song2024sdxs} explored learning controlled one-step generators via score distillation, but their framework requires both the teacher model and the generated "fake" scores to possess a ControlNet compatible with the specific condition being injected. JDM~\citep{luo2025adding} minimizes a tractable upper bound of the joint KL divergence, which can teach a controllable student with an uncontrollable teacher.
Generally, prior works are built on specific distillation techniques for adapting controls to one-step models. We argue that given an already proficient pre-trained one-step generator, performing an additional distillation for adding new controls is computationally expensive and unnecessary. However, 
\textit{how to develop a \textbf{native} technique for one-step generators remains unexplored.} Our work takes the first step in designing a native approach for one-step generators to add new controls to one-step generators without requiring any diffusion distillation.
    
\begin{figure*}[t]
    \centering
    \includegraphics[width=1\linewidth]{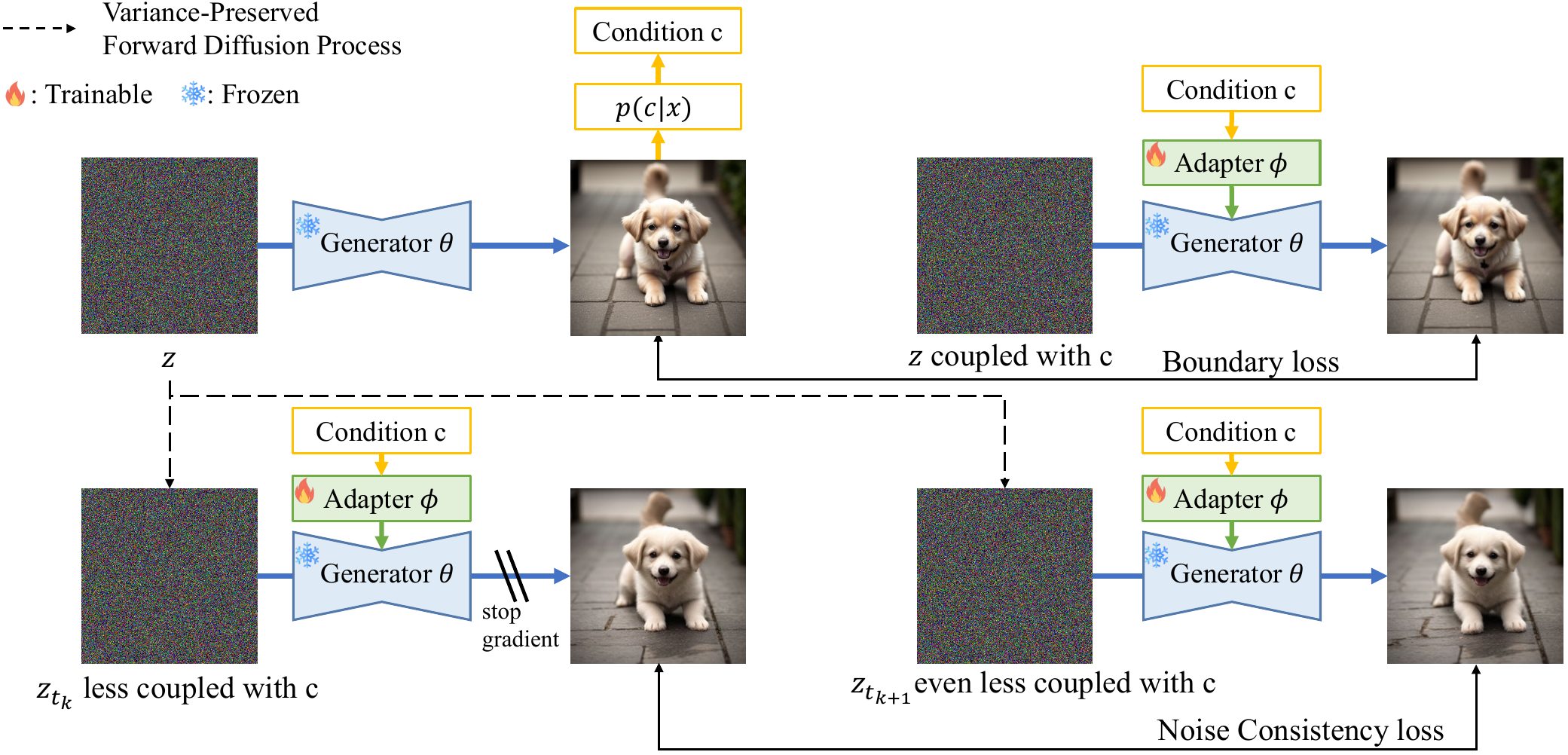} 
    \vspace{-4mm}
    \caption{Framework description of our method.}
    \label{fig:framwork}
\end{figure*}

\vspace{-2mm}
\section{Method}
\vspace{-2mm}
\spara{Problem Setup} 
Let $\rvz \in \sR^m$ be a latent variable following a standard Gaussian density $p(\rvz)$. We have a pre-trained generator $f_{\theta}: \sR^m \to \sR^n$ that maps $\rvz$ to a data sample $\rvx = f_{\theta}(\rvz)$. The distribution of these generated samples has a density $p_{\theta}(\rvx)$, providing a high-quality approximation of the data distribution, such that $p_\theta(\rvx) \approx p_d(\rvx)$.  For any $\rvx$, there is a conditional probability density $p(\rc|\rvx)$ specifying the likelihood of condition $\rc$ given $\rvx$. 
Our goal is to directly incorporate additional control $\rc$ for a pre-trained one-step generator with additional trainable parameters $\phi$ (e.g. a ControlNet). 
More specifically, we aim to train a conditional generator $f_{\theta, \phi}(\rvz, \rc)$ that, when given a latent code $\rvz$ sampled from a standard Gaussian distribution and an independently sampled condition $\rc$, produces a sample $\rvx$ such that the joint distribution of $(\rvx, \rc)$ matches $p(\rvx, \rc) = p_{\theta}(\rvx) p(\rc|\rvx)$.

\subsection{Failure modes of Naive Approaches for Adding Controls}
Given a pre-trained diffusion model $\epsilon_\theta(\rvx_t, t)$, the adapters for injecting new conditions can be trained by minimizing a denoising loss~\cite{zhang2023adding, mou2023t2i}. Hence, a natural idea for injecting new conditions into the pre-trained one-step generator is also adapting the denoising loss for training as follows:
\begin{align}
    & \mathrm{min}_\phi d( f_{\theta,\phi}(\rvz,\rc) , \rvx), \ \rvz = \alpha_T \rvx + \sigma_T\epsilon, \ \rc \sim p(\rc|\rvx),
\end{align}
where $d(\cdot,\cdot)$ is a distance metric.
This approach can potentially inject new conditions into the one-step generator $f_\theta$, similar to existing adapter approaches for DMs. However, it fails to generate high-quality images --- the resulting images are blurry, which is due to the \textit{high variance of the optimized objective}. 
Specifically, its optimal solution is achieved at $f_{\theta,\phi}(\rvz,\rc) = E[\rvx|\rvz,\rc]$, which is an average of every potential image. 

To reduce the variance, one may consider performing denoising loss over \textit{coupled pairs} $(\rvz,\rvx,\rc)$, where $\rvz\sim\mathcal{N}(0, I)$, $\rc$ is the condition corresponding to the generated samples $\rvx=f_\theta(\rvz)$. 
However, such an approach is unable to perform conditional generation given random $\rvz$. This is because the model is only exposed to instances of $\rvz$ strongly associated with $\rc$ (i.e., $\rc \sim p(c|f_\theta(\rvz))$) during its training, and never encountered random pairings of $\rc$ and $\rvz$.

High variance in denoising loss is also a key factor hindering fast sampling in diffusion models. Several methods have been proposed to accelerate the sampling of diffusion models, with optimization objectives typically characterized by low variance properties~\cite{song2023consistency,on_distill,liu2023insta}. 
Among these, consistency models~\cite{song2023consistency,song2024improved} stand out as a promising approach --- instead of optimizing direct denoising loss, they optimize the distance between denoising results of highly-noisy samples and lowly-noisy samples:
\begin{equation}
    \min_\alpha L(\alpha) = d( g_\alpha(\rvx_{t_{n+1}}) , \mathrm{sg}( g_\alpha(\rvx_{t_{n}}) ) ),
\end{equation}
where $\mathrm{sg}(\cdot)$ denotes the stop-gradient operator and $g_\alpha$ denotes the desired consistency models. Similar to denoising loss, consistency loss can also force networks to use conditions; thus, it can be used to train adapters to inject new conditions~\cite{xiao2024ccm}.
However, the consistency approach cannot be adapted to the one-step generator since it requires defining the loss over multiple noisy-level images, while the one-step generator only takes random noise as input. 

\subsection{Our Approach: Noise Consistency Training}\label{sec:method}
To directly inject condition to one-step generator, we propose \textbf{Noise Consistency Training}, which diffuses noise to decouple it from the condition and operates the consistency training in \textbf{noise space}.
Specifically, we diffuse an initial noise $\rvz\sim\mathcal{N}(0,I)$ to multiple levels $\rvz_t$ via variance-preservation diffusion as follows:
\begin{equation}
    \rvz_t = \sqrt{1-\sigma_t}\rvz + \sigma_t\epsilon,
\end{equation}
where $\epsilon\sim \mathcal{N}(0,I)$. This ensures that $\rvz_t$ also follows the standard Gaussian distribution, thus it can be transformed to the high-quality image by the pre-trained one-step generator $f_\theta$.

To inject new conditions to $f_\theta$, we apply an adapter with parameter $\phi$, which transforms $f_\theta(\cdot)$ that only takes random noise as input to $f_{\theta,\phi}(\cdot,\cdot)$ that can take an additional condition $\rc$ as input. We sample coupled pairs $(\rvz,\rc)$ from $p_\theta(\rvz,\rc)$, where $p(\rvz,\rc) = p(\rvz)p_\theta(\rc|\rvz)$, and $p_\theta(\rc|\rvz) \triangleq p_\theta(\rc|f_\theta(\rvz))$. By the $(\rvz,\rc)$ pairs, we can perform \textbf{Noise Consistency Loss} as follows:
\begin{equation}
\begin{aligned}
\label{eq:ncm}
          \mathrm{min}_\phi & \E_{p(\rvz)p(\rc|f_\theta(\rvz))} \E_{q(\rvz_{t_n}|\rvz), q(\rvz_{t_{n-1}}|\rvz)} \E_{\epsilon\sim\mathcal{N}(0,I)}   d(f_{\theta,\phi}(\rvz_{t_n},\rc) , \mathrm{sg}(f_{\theta,\phi}(\rvz_{t_{n-1}},\rc))) \\
          & =\E_{\rvz,\rc|\rvz,\epsilon} d(f_{\theta,\phi}(\rvz_{t_n},\rc) , \mathrm{sg}(f_{\theta,\phi}(\rvz_{t_{n-1}},\rc))), \quad \# \mathrm{Simplified \ Notation}
\end{aligned} 
\end{equation}
where $\rvz_{t_n} = \sqrt{1 - \sigma_{t_n}^2}\rvz + \sigma_{t_n}\epsilon$ and $\rvz_{t_{n-1}} = \sqrt{1 - \sigma_{t_{n-1}}^2}\rvz + \sigma_{t_{n-1}}\epsilon$.
The defined diffusion process can gradually diffuse the coupled pairs $(\rvz,\rc)$ to independent uncoupled pairs $(\rvz_T,\rc)$. 
By minimizing the distance between predictions given ``less-coupled'' pairs and ``more-coupled'' pairs, we can force the network to utilize the condition.
Once trained, the consistency is ensured in the \textbf{noise space}. It is expected that the adapter $\phi$ can be trained for injecting new conditions $\rc$, while keeping the high-quality generation capability in one-step. Since the optimized objective has low variance and the generator $f_\theta$ can produce high-quality images, the adapter just need to learn how to adapt to the conditions $\rc$.

\begin{lemma}
\label{lemma:diffuse_joint}
    Define $p(\rvz_0|\rc)\triangleq\frac{p(\rvz_0)p(\rc|f_\theta(\rvz))}{p(\rc)}$ and $p(\rvz_t|\rc) \triangleq \int q(z_t|\rvz_0)p(\rvz_0|\rc)dz_0$.
    The forward diffusion process defines an interpolation for the joint distribution $p(\rvz_t,\rc)\triangleq p(\rvz_t|\rc)p(\rc)$ between $p(\rvz_0,\rc)=p(\rc|f_\theta(\rvz_0))p(\rvz_0)$ and $p(\rvz_T,\rc) = p(\rvz_T) p(\rc)$.
\end{lemma}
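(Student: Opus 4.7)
The plan is to verify the two boundary conditions and note that the middle behavior is a valid interpolation by construction. The argument is essentially a direct calculation from the definitions.

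First I would handle the $t=0$ endpoint. At $t=0$, the forward kernel is trivial, $q(\rvz_0|\rvz_0)=\delta(\rvz_0-\rvz_0)$, so the defining integral collapses to $p(\rvz_0|\rc) = p(\rvz_0)p(\rc|f_\theta(\rvz_0))/p(\rc)$, which is exactly the Bayes-rule decomposition of $p(\rvz_0,\rc)=p(\rc|f_\theta(\rvz_0))p(\rvz_0)$ after multiplying by $p(\rc)$. This matches the claimed left endpoint.

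Next I would handle the $t=T$ endpoint, which is the only non-trivial step. The key observation is that the variance-preserving schedule is set up so that $\sigma_T=1$, giving $\rvz_T = \sqrt{1-\sigma_T^2}\,\rvz_0 + \sigma_T\epsilon = \epsilon$. Hence the forward kernel becomes $q(\rvz_T|\rvz_0) = \mathcal{N}(\rvz_T;\vzero,\mI)$, which is independent of $\rvz_0$ and equals the marginal $p(\rvz_T)$. Substituting into the definition gives
\begin{equation*}
p(\rvz_T|\rc) \;=\; \int q(\rvz_T|\rvz_0)\,p(\rvz_0|\rc)\,d\rvz_0 \;=\; p(\rvz_T)\int p(\rvz_0|\rc)\,d\rvz_0 \;=\; p(\rvz_T),
\end{equation*}
so $p(\rvz_T,\rc) = p(\rvz_T|\rc)p(\rc) = p(\rvz_T)p(\rc)$, i.e.\ the product of the marginals.

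Finally, for intermediate $t\in(0,T)$, the definition $p(\rvz_t,\rc) \triangleq p(\rvz_t|\rc)p(\rc)$ with $p(\rvz_t|\rc)=\int q(\rvz_t|\rvz_0)p(\rvz_0|\rc)d\rvz_0$ is by construction a well-defined joint distribution; consistency of the marginals in $\rc$ is immediate (both endpoints share the same $p(\rc)$), and the marginal in $\rvz_t$ is standard Gaussian throughout because the VP kernel preserves the Gaussian marginal of $\rvz_0$. This justifies calling $\{p(\rvz_t,\rc)\}_{t\in[0,T]}$ an interpolation between the coupled joint at $t=0$ and the product of marginals at $t=T$. There is no real obstacle here: the only subtlety is the schedule convention $\sigma_T=1$, which is what actually produces marginal independence at the right endpoint; if one adopted a different convention, one would only recover approximate independence and the statement would have to be weakened.
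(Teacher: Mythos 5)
Your proposal is correct and takes essentially the same route as the paper's proof: both verify the two endpoints directly from the definitions, using the degenerate (Dirac) kernel at $t=0$ and the fact that $q(\rvz_T|\rvz_0)=\mathcal{N}(\rvz_T;\vzero,\mI)$ is independent of $\rvz_0$ at $t=T$ so that the integral factors. The only cosmetic difference is that you invoke the normalization $\int p(\rvz_0|\rc)\,d\rvz_0=1$ where the paper substitutes the Bayes formula, cancels $p(\rc)$, and then re-identifies $\int\gamma(\rvz_0)p(\rc|f_\theta(\rvz_0))\,d\rvz_0=p(\rc)$; these are the same computation in a different order.
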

The above \cref{lemma:diffuse_joint} provides a formal justification to our noise diffusion process as interpolation between the coupled pairs $(\rvz,\rc)$ to independent pairs $(\rvz_T,\rc)$. The proof can be found in \cref{app:theory}.

\begin{lemma}
\label{lemma:connection}
    We define the $f_{\theta, \phi}(\sqrt{1-\sigma_{t_{k+1}}^2}\rvz + \sigma_{t_{k+1}}\epsilon ,\rc)$ induced distribution to be $p_{\theta, \phi, t_{k+1}}$. The proposed noise consistency loss is a practical estimation of the following loss:
    \begin{equation}
    L(\phi) = \sum_{k=0}^{N-1} \MMD^2( p_{\theta, \phi, t_{k+1}}, p_{\theta, \phi, t_{k}}),
    \end{equation}
    under specific hyper-parameter choices (e.g., set particle samples to 1). 
\end{lemma}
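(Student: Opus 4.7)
The plan is to recognize the noise consistency loss~(\ref{eq:ncm}) as the single-particle empirical estimator of $L(\phi)$, by making compatible choices for the distance $d$ and the MMD kernel feature map $\psi$, and exploiting the coupling of samples through shared $(\rvz, \rc, \epsilon)$.

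First I would align the metric with the kernel: pick $\psi$ so that $d(u, v) = \|\psi(u) - \psi(v)\|^2$, e.g.\ the canonical choice $\psi = \mathrm{id}$ with $d$ the squared $L^2$ distance. Next I would identify the marginals: by \cref{lemma:diffuse_joint}, $\rvz_{t_k}$ remains standard Gaussian, so $f_{\theta,\phi}(\sqrt{1-\sigma_{t_k}^2}\rvz + \sigma_{t_k}\epsilon, \rc)$ with $(\rvz,\rc) \sim p_\theta(\rvz,\rc)$ and $\epsilon \sim \mathcal{N}(\vzero, \mI)$ is marginally distributed as $p_{\theta,\phi,t_k}$, and analogously at level $t_{k+1}$.

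Starting from the MMD definition~(\ref{eq:mmd}), $\MMD^2(p, q) = \|\mathbb{E}_p\psi(\rvx) - \mathbb{E}_q\psi(\rvy)\|^2$, I would replace each inner expectation by a single Monte Carlo sample, producing the biased one-sample empirical estimator $\|\psi(\rvx_1) - \psi(\rvy_1)\|^2$. Specializing $p = p_{\theta,\phi,t_{k+1}}$ and $q = p_{\theta,\phi,t_k}$, and instantiating the two samples as the coupled draws $\rvx_1 = f_{\theta,\phi}(\rvz_{t_{k+1}}, \rc)$ and $\rvy_1 = f_{\theta,\phi}(\rvz_{t_k}, \rc)$ sharing the same $(\rvz, \rc, \epsilon)$, this estimator coincides (in value, ignoring the harmless stop-gradient operator which only affects gradient flow) with the per-$k$ integrand of~(\ref{eq:ncm}); summing over $k=0,\dots,N-1$ then yields $L(\phi)$ on the target side and the noise consistency loss on the estimator side.

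The main obstacle is the bias incurred by using a single particle. An elementary expansion gives
\[
\mathbb{E}\|\psi(\rvx) - \psi(\rvy)\|^2 \;=\; \MMD^2(p, q) + \mathrm{tr}\,\mathrm{Cov}[\psi(\rvx)] + \mathrm{tr}\,\mathrm{Cov}[\psi(\rvy)] - 2\,\mathrm{tr}\,\mathrm{Cov}[\psi(\rvx), \psi(\rvy)],
\]
so the estimator is in general biased. I would argue that the \emph{coupling} of $\rvx$ and $\rvy$ through shared $(\rvz, \rc, \epsilon)$ induces a positive cross-covariance that acts as a control variate, shrinking this bias toward zero as the two noise levels grow close and vanishing entirely at the consistency fixed point (where the coupled samples coincide). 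This is precisely the sense of ``practical estimation'' and ``specific hyper-parameter choices (e.g., set particle samples to $1$)'' flagged in the statement, and pointing this out explicitly would conclude the proof.
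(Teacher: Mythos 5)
Your proposal is correct and lands on the same core identification as the paper: the noise consistency loss is the one-particle, coupled-sample estimate of $\sum_k \MMD^2(p_{\theta,\phi,t_{k+1}}, p_{\theta,\phi,t_k})$, with the coupling realized by sharing $(\rvz,\rc,\epsilon)$ across the two noise levels. The route differs in one respect. You work from the mean-embedding form $\MMD^2=\|\E\psi(\rvx)-\E\psi(\rvy)\|^2$ with $\psi=\mathrm{id}$ (a linear kernel), whereas the paper (Remark~\ref{rmk:lm2}) works from the Gram-matrix form of the one-particle V-statistic, $k(x_1,x_1)+k(y_1,y_1)-2k(x_1,y_1)$, with kernels satisfying $k(x,x)=0$, namely $k(x,y)=-\|x-y\|^2$ (recovering the squared $\ell_2$ loss, and coinciding with your linear-kernel MMD up to a factor of $2$ at the population level) and the conditionally positive definite $k(x,y)=c-\sqrt{\|x-y\|^2+c^2}$ (recovering the pseudo-Huber loss). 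The paper's formulation therefore also covers the pseudo-Huber variant actually used in practice, which your explicit feature map does not directly reach; on the other hand, your expansion $\E\|\psi(\rvx)-\psi(\rvy)\|^2=\MMD^2+\mathrm{tr}\,\mathrm{Cov}[\psi(\rvx)]+\mathrm{tr}\,\mathrm{Cov}[\psi(\rvy)]-2\,\mathrm{tr}\,\mathrm{Cov}[\psi(\rvx),\psi(\rvy)]$ makes the bias of the one-particle estimator explicit and correctly identifies the coupling as the mechanism that controls it (and kills it at the consistency fixed point) --- a point the paper's remark glosses over with the phrase ``practical estimation.'' One caveat worth adding: the linear kernel is not characteristic, so under your choice $\MMD^2=0$ only matches means; this is immaterial for the lemma as stated but matters if one wants the estimated loss to feed into \cref{thm:main theorem}, which is why the paper's conditionally positive definite kernel is the better choice for the overall argument.
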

See proof in the \cref{app:theory}. The above \cref{lemma:connection} builds the connection between our noise consistency training and conditional distribution matching. 
Technically speaking, using larger particle numbers can further reduce training variance. However, in practice, we found that directly using a single particle achieves similar performance and is more computationally feasible. 
More investigations on the effect of particle numbers can be found in \cref{app:exp}.
This work serves as proof of concept that we can design an approach native to one-step generator in learning new controls, we leave other exploration for further reducing variance in future work.

\textbf{Boundary Loss} A core difference between NCT and CM lies in the model's behavior when reaching boundaries. Specifically, for CM, when the input reaches the boundary $\rvx_0$, the model only needs to degenerate into an identity mapping outputting $\rvx_0$, which can be easily satisfied through reparameterization g to stabilize the training. NCT, however, is fundamentally different --- when the input reaches the boundary $\rvz$, the model cannot simply degenerate into an identity mapping, but needs to map $\rvz$ to high-quality clean images. 
This means this boundary is non-trivial --- the network needs to learn to map $\rvz$ to corresponding images. Simply reparameterizing $f_{\theta,\phi}$ cannot fully stabilize the training.
To satisfy this boundary condition and stabilize the training, we propose setting the clean image corresponding to $\rvz$ as $f_{\theta}(\rvz)$ and implementing the following \textit{boundary loss}:
\begin{align}
    \mathrm{min}_\phi \E_{\rvz,\rc|\rvz,\epsilon} d(f_{\theta,\phi}(\rvz,\rc) , f_{\theta}(\rvz)), \ \rvz \sim \mathcal{N}(0,I), \ \rc \sim p(\rc|f_\theta(\rvz)).
\end{align}
By minimizing this loss, we can ensure the boundary conditions hold and constrain the generator's output to be close to the data distribution. 
Intuitively, this loss is easy to understand: when the generator receives the same noise $\rvz$ and conditions corresponding to $f_\theta$, its generation should be invariant. With the help of this loss, we can constrain the generator's output to stay near the data distribution --- otherwise, if we only minimize the noise consistency loss, the model might find unwanted shortcut solutions.

\begin{theorem}
\label{thm:main theorem}
Consider a parameter set $\phi$ that satisfies the following two conditions:

\textbf{1. Boundary Condition}:
The parameters $\phi$ ensure the boundary loss is zero:
\begin{equation}
\notag
\E_{p(\rvz)p(\rc|f_{\theta}(\rvz))}[d(f_{\theta, \phi}(\rvz,\rc) , f_{\theta}(\rvz))] = 0,
\end{equation}
\textbf{2. Consistency Condition}:
The parameters $\phi$ also satisfy:
\begin{equation}
\notag
L(\phi) = \sum_{k=0}^{N-1} \MMD^2( p_{\theta, \phi, t_{k+1}}, p_{\theta, \phi, t_{k}} ) = 0
\end{equation}
Then $f_{\theta, \phi}$ maps independent $p(\rvz)p(\rc)$ to the target joint distribution $p_{\theta}(\rvx)p(\rc|\rvx)$. 
\end{theorem}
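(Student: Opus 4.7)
The strategy is to chain the two hypotheses across the $N$-step noise-diffusion discretization so as to equate the pushforward distribution at the coupled endpoint $t_0=0$ with the pushforward at the decoupled endpoint $t_N=T$; Lemma~\ref{lemma:diffuse_joint} will then let me identify both endpoints with the distributions of interest and conclude.

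First I would unpack the boundary condition. At $t_0=0$ we have $\sigma_{t_0}=0$, so $\rvz_{t_0}=\rvz$, and by definition of $p_{\theta,\phi,t_0}$ the input pair $(\rvz,\rc)$ is drawn from the coupled law $p(\rvz)p(\rc|f_\theta(\rvz))$. The boundary hypothesis forces $f_{\theta,\phi}(\rvz,\rc)=f_\theta(\rvz)$ almost surely on this coupled law, so substituting $\rvx=f_\theta(\rvz)\sim p_\theta$ with $\rc\sim p(\rc|\rvx)$ identifies $p_{\theta,\phi,t_0}$ with the target joint $p_\theta(\rvx)p(\rc|\rvx)$.

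Next I would use the consistency condition. Non-negativity of each summand together with $L(\phi)=0$ forces $\MMD^2(p_{\theta,\phi,t_{k+1}},p_{\theta,\phi,t_k})=0$ for every $k$; for a characteristic kernel $\psi$ this yields $p_{\theta,\phi,t_{k+1}}=p_{\theta,\phi,t_k}$, and iterating gives $p_{\theta,\phi,t_N}=p_{\theta,\phi,t_0}=p_\theta(\rvx)p(\rc|\rvx)$. To close the argument, I would invoke Lemma~\ref{lemma:diffuse_joint} at $t_N=T$ (taking $\sigma_T=1$): the joint $(\rvz_T,\rc)$ factorizes as $p(\rvz_T)p(\rc)$, so $p_{\theta,\phi,t_N}$ is precisely the pushforward of the independent law $p(\rvz)p(\rc)$ under $(\rvz,\rc)\mapsto(f_{\theta,\phi}(\rvz,\rc),\rc)$, which combined with the chain of equalities yields the claim.

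The main obstacle is the MMD-to-equality step, which requires a characteristic kernel assumption on $\psi$; without it one only recovers equality of $\psi$-moments at each discretization step rather than equality of distributions. A secondary subtlety is exact decoupling of $(\rvz_T,\rc)$ at the final time, which is only guaranteed when the schedule satisfies $\sigma_T=1$, and the almost-sure (not merely expected) consequence of the boundary loss, which is automatic whenever $d$ is a norm-type distance on a separable space. All remaining steps are bookkeeping.
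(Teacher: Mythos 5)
Your proposal is correct and follows essentially the same route as the paper's proof: the boundary condition pins the pushforward at $t_0$ to the target joint $p_\theta(\rvx)p(\rc|\rvx)$ (the paper isolates this as its Boundary Loss Lemma, which you re-derive inline), the vanishing MMD sum with a characteristic kernel chains equality of pushforwards from $t_0$ to $t_N$, and Lemma~\ref{lemma:diffuse_joint} identifies the $t_N$ input law with the independent coupling $p(\rvz)p(\rc)$. The caveats you flag (characteristic kernel, $\sigma_T=1$, almost-sure rather than in-expectation boundary identity) are exactly the assumptions the paper's appendix relies on.
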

See proof in the Appendix. \cref{thm:main theorem} provides theoretical insight for our optimization objective, which is an empirical version for practice.

\textbf{Overall Optimization} We observed that the noise consistency loss is only meaningful when boundary conditions are satisfied or nearly satisfied; otherwise, the generator $f_{\theta,\phi}$ can easily find undesirable shortcut solutions, thus we suggest using a constrained optimization form as follows:
\newtheorem{defn}{Definition}
\begin{defn}[Noise Consistency Training] 
\label{defn:ncm}
Given a fixed margin $\xi$, the general optimization  can be transformed into the following:
\begin{equation}
\begin{split}
    \min_\phi \quad & \E_{\rvz,\rc|\rvz,\epsilon} L_{\mathrm{con}}(\phi) = d(f_{\theta,\phi}(\rvz_{t_n},\rc) , \mathrm{sg}(f_{\theta,\phi}(\rvz_{t_{n-1}},\rc))) \\
        \mathrm{s.t.}\quad & L_{\mathrm{bound}}(\phi) = \E_{\rvz,\rc|\rvz,\epsilon}d(f_{\theta,\phi}(\rvz,\rc) , f_{\theta}(\rvz)) < \xi,\\
\end{split}
\end{equation}
where $\rvz,\epsilon\sim\mathcal{N}(0,I)$, $\rc\sim p(\rc|f_\theta(\rvz))$, $\rvz_{t_n} = \sqrt{1-\sigma_{t_n}^2}\rvz+\sigma_{t_n}\epsilon$ and $\rvz_{t_{n+1}} = \sqrt{1-\sigma_{t_{n+1}}^2}\rvz+\sigma_{t_{n+1}}\epsilon$.
\end{defn}
The constrained optimization problem presented in Definition~\ref{defn:ncm} is hard to optimize directly. We therefore reformulate it as a corresponding saddle-point problem:
\begin{equation}
\label{eq:saddle}
    \max_\lambda \min_\phi \big\{  L_{\mathrm{con}}(\phi) + \lambda L_{\mathrm{bound}}(\phi) \big\}, \quad \lambda \geq 0.
\end{equation}

\begin{algorithm}[!t]
\caption{Noise Consistency Training}
\label{alg:ncm}
\begin{algorithmic}[1]
\REQUIRE  Pre-trained One-Step Generator $f_\theta$, Adapter $\phi$, total iterations $N$
\ENSURE Optimized adapter $\phi$ for injecting new condition. 
\FOR{$i \leftarrow 1$ {\bfseries to} $N$}
\STATE  Sample noise $\rvz$ from standard Gaussian distribution;
\STATE  Sample noise $\epsilon$ from standard Gaussian distribution;
\STATE  Sample $\rvx$ with initialized noise $\rvz$ from frozen generator $f_\theta$, i.e., $\rvx = f_\theta(\rvz)$. 
\STATE  Sample condition $\rc$ corresponding to $\rvx$ by $p(\rc|\rvx)$.
\STATE \texttt{\textcolor{blue}{\# Primal Step:}}
\STATE \texttt{\textcolor{purple}{\#\# Diffuse Noise via Variance-Preserved Diffusion}}
\STATE $z_{t_{k+1}} \gets \alpha_{t_{k+1}} \rvz + \sigma_{t_{k+1}} \epsilon$ and $z_{t_{k}} \gets \alpha_{t_{k}} \rvz + \sigma_{t_{k}} \epsilon$.
\STATE \texttt{\textcolor{purple}{\#\# Compute Noise Consistency Loss}}
\STATE  $\mathcal{L}_{con} \gets d(f_{\theta,\phi}(\rvz_{t_{k+1}}, \rc) , \mathrm{sg}(f_{\theta,\phi}(\rvz_{t_k}, \rc)))$
\STATE  \texttt{\textcolor{purple}{\#\# Compute Boundary loss}}
\STATE  $\mathcal{L}_{bound} \gets d(f_{\theta,\phi}(\rvz, \rc) , \rvx)$
\STATE \texttt{\textcolor{purple}{\#\# Compute Total Loss and Update}}
\STATE $\mathcal{L}_{total} \gets \mathcal{L}_{con} + \lambda\mathcal{L}_{bound}$
\STATE Update $\phi$ using $\nabla_\phi \mathcal{L}_{total}$
\STATE \texttt{\textcolor{blue}{\# Dual Step:}}
\STATE Update $\lambda$ according to \cref{eq:dual}.
\ENDFOR
\end{algorithmic}
\end{algorithm}

\textbf{Concrete Algorithm} 
To efficiently optimize this saddle-point problem, we employ the primal-dual algorithm tailored for the saddle-point problem, which alternates between updating the primal variables $\phi$ and the dual variable $\lambda$.
Specifically, in the \textit{primal} step, for a given dual variable $\lambda$, the algorithm minimizes the corresponding empirical Lagrangian with respect to $\phi$ under a given dual variable $\lambda$, i.e.,
\begin{equation}
\begin{split}
& \phi_{k+1} := \argmin_{\phi} \Big\{ L_{\mathrm{con}}(\phi_k) + \lambda L_{\mathrm{bound}}(\phi_k) \Big\} \\
\end{split}
\end{equation}
In practice, this update for $\phi$ is performed using stochastic gradient descent. 
Subsequently, in the \textit{dual} step, we update the dual variable $\lambda$ as follows:
\begin{equation}
\label{eq:dual}
    \lambda_{t+1} := \max \left\{\lambda_{t}+\eta\cdot \big(L_{\mathrm{con}}-\xi\big), 0\right\},
\end{equation}
where $\eta$ is the learning rate for the dual update.

\cref{alg:ncm} provides the pseudo-code for our primal-dual optimization of the adapter parameters $\phi$. In contrast to the direct application of stochastic gradient descent in \cref{eq:saddle}, the primal-dual algorithm dynamically adjusts $\lambda$. This avoids an extra hyper-parameter tuning and can provide an early-stopping condition (e.g., $\lambda=0$). Additionally, convergence is guaranteed under sufficiently long training and an adequately small step size~\cite{chamon2021constrained}.

\vspace{-2mm}
\section{Experiments}
\vspace{-1mm}
\begin{table*}[t]
\centering
\caption{Comparison of machine metrics of different methods for Canny, HED, Depth and 8$\times$ Super Resolution tasks. The mark $^\dagger$ denotes our reimplementation with the same one-step generator as used in NCT.}
\label{tab:main}
\resizebox{\linewidth}{!}{
\begin{tabular}{l|c|cc|cc|cc|cc|cc}
\toprule
\multirow{2}{*}{Method} & \multirow{2}{*}{NFE$\downarrow$} & \multicolumn{2}{|c|}{Canny} & \multicolumn{2}{|c|}{HED} & \multicolumn{2}{|c|}{Depth} & \multicolumn{2}{|c|}{8$\times$ Super Resolution} & \multicolumn{2}{|c}{Avg}  \\
~ & ~ & FID$\downarrow$ & Consistency$\downarrow$ & FID$\downarrow$ & Consistency$\downarrow$ & FID$\downarrow$ & Consistency$\downarrow$ & FID$\downarrow$ & Consistency$\downarrow$ & FID$\downarrow$ & Consistency$\downarrow$ \\
\midrule
ControlNet & 50 & 14.48 & 0.113 & 19.21 & 0.101 & \textbf{15.25} & 0.093 & \textbf{11.93} & 0.065 & 15.22 & 0.093 \\
\midrule
DI + ControlNet & 1 & 22.74 & 0.141 & 28.04 & 0.113 & 22.49 & 0.097 & 15.57 & 0.126 & 22.21 & 0.119 \\
JDM$^\dagger$ & 1 & 14.35 & 0.122 & 16.75 & \textbf{0.055} & 16.71 & 0.093 & 13.23 & 0.068 & 15.26 & 0.085 \\
\textbf{NCT (Ours)} & 1 & \textbf{13.67} & \textbf{0.110} & \textbf{14.96} & 0.060 & 16.45 & \textbf{0.088} & 12.17 & \textbf{0.053} & \textbf{14.31} & \textbf{0.078} \\
\bottomrule
\end{tabular}
}
\end{table*}
\begin{figure*}[t]
    \centering
    \includegraphics[width=1\linewidth]{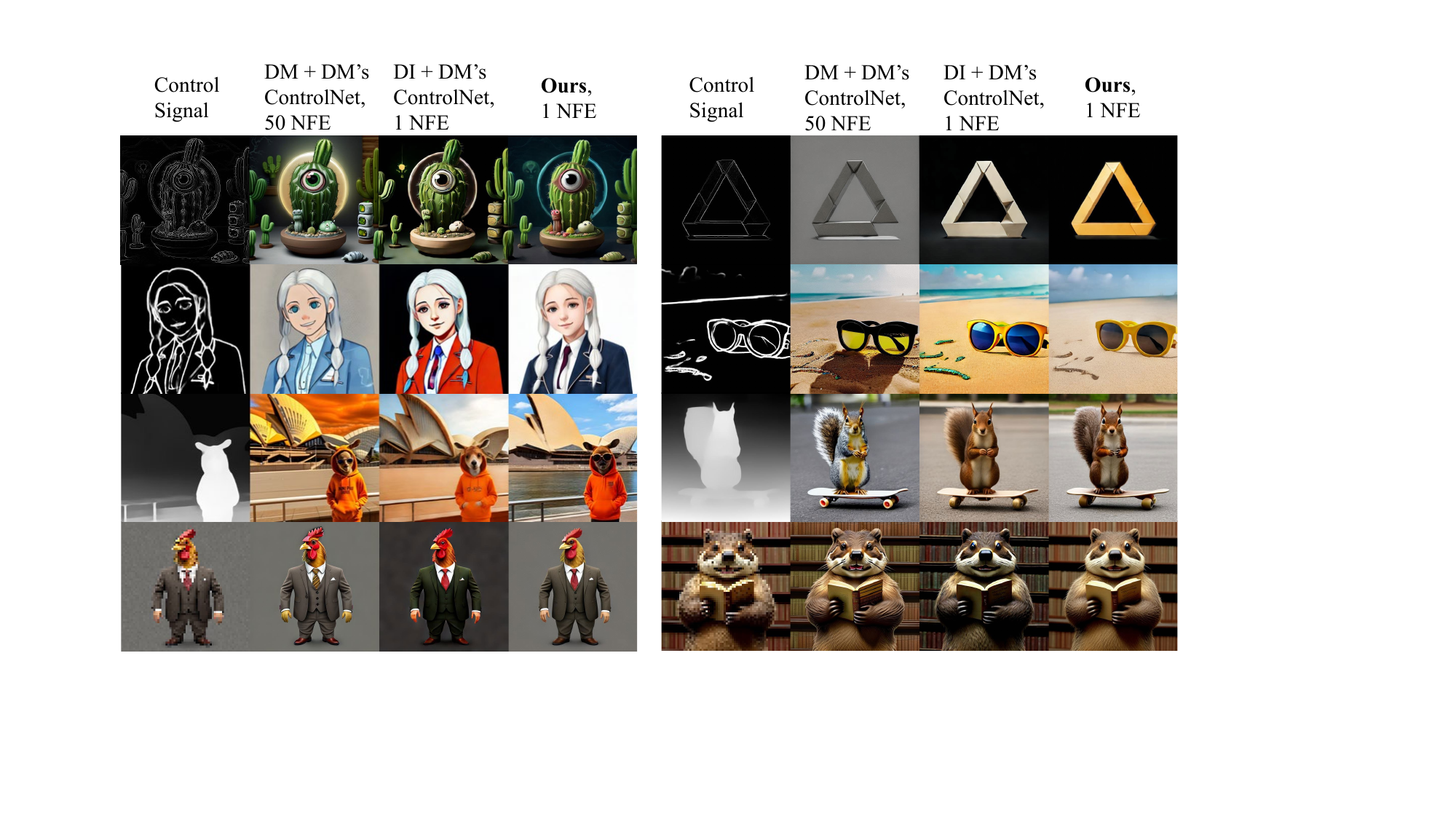} 
    \vspace{-5mm}
    \caption{Qualitative comparisons on controllable generation across different control signals against competing methods.}
    \label{fig:comparisons_control}
\end{figure*}

\subsection{Controllable Generation}

\spara{Experimental Setup}
All models are trained on an internally collected dataset. The one-step generator was initialized using weights from Stable Diffusion 1.5~\cite{rombach2022high}. Subsequently, the one-step generator was pre-trained using the Diff-Instruct~\cite{luo2023diff}. The ControlNet was initialized following the procedure outlined in its original publication~\cite{zhang2023adding}. An Exponential Moving Average (EMA) with a decay rate of 0.9999 was applied to the ControlNet parameters, denoted as $\phi$.

To evaluate the performance of our proposed method in one-step controllable generation, we employed four distinct conditioning signals: Canny edges~\cite{canny1986computational}, HED (Holistically-Nested Edge Detection) boundaries~\cite{xie2015holistically}, depth maps, and lower-resolution images.

\spara{Evaluation Metric} 
Image quality was assessed using the Fréchet Inception Distance (FID)~\cite{heusel2017gans}. Specifically, the FID score was computed by comparing images generated by the base diffusion model without controls against images generated with the incorporation of the aforementioned conditional inputs. The consistency metric for measuring controllability is quantified between the conditioning input $\rc$ and the condition extracted from the generated image $h(\rvx)$, as formulated below:
\begin{equation}
\small
\mathrm{Consistency} = || h(\rvx) - \rc||_1,
\end{equation}
where $h(\cdot)$ represents the function used to extract the conditioning information (e.g., Canny edge detector, depth estimator) from a generated image $\rvx$, and $\rc$ is the target conditional input. Furthermore, to assess computational efficiency, we report the NFE required to generate a single image.

\spara{Quantitative Results}
We conduct comprehensive evaluations, benchmarking our proposed approach against three established baseline methods: (1) the standard diffusion model with ControlNet; (2) a pre-trained one-step generator integrated with the DM's ControlNet; and (3) a crafted ControlNet specifically trained for a one-step generator trained via JDM distillation~\cite{luo2025adding}. \textit{Notably, the JDM approach necessitates an additional, computationally intensive distillation phase to incorporate control mechanisms. This step is redundant given that the one-step generator has already undergone a distillation process. In contrast, our method is tailored for one-step generators, obviating the need for further distillation and thereby enhancing computational efficiency.}
The quantitative results, presented in \cref{tab:main}, assess both image fidelity (FID) and adherence to conditional inputs across diverse control tasks. Our proposed method achieves a remarkable reduction in the number of function evaluations (NFEs) from 50 to 1, while concurrently maintaining or surpassing the performance metrics of the baselines. Specifically, our approach demonstrates superior FID scores and stronger consistency measures across various conditioning tasks, signifying enhanced image quality and more precise alignment with control conditions. These findings collectively establish that our method achieves a superior trade-off between computational efficiency and sample quality in controlled image generation. It delivers state-of-the-art performance with substantially reduced computational overhead and a more streamlined training pipeline.

\spara{Qualitative Comparison} 
A qualitative comparison of our method against baselines is presented in \cref{fig:comparisons_control}, comparing standard ControlNet and DI+ControlNet which does not require additional distillation. Visual results reveal that while the standard DM's ControlNet can impart high-level control to one-step generators, this integration frequently results in a discernible degradation of image quality. In stark contrast, our approach, which involves customized training for adding new controls to one-step generator, consistently produces images of significantly higher fidelity. These visual results substantiate the efficacy of our proposed methodology, suggesting its capability to implicitly learn the conditional distribution $p(\rvx|\rc)$ through our novel noise consistency training.

\begin{figure*}[t]
    \centering
    \includegraphics[width=1\linewidth]{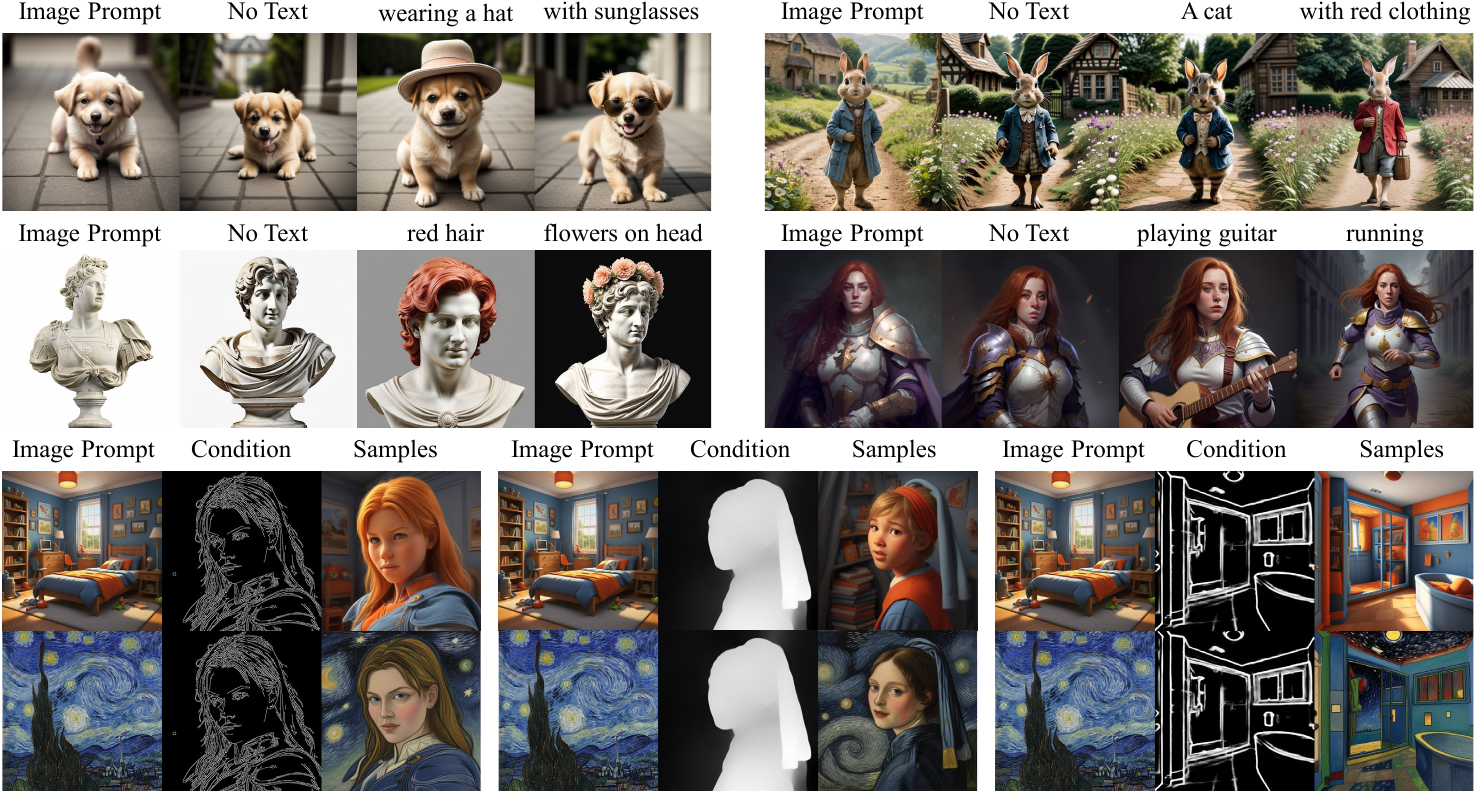} 
    \vspace{-4mm}
    \caption{Visual samples of image-reference generations. The samples are generated by our NCT with 1NFE.}
    \label{fig:ip_demo}
\end{figure*}

\subsection{Image Prompted Generation}

\spara{Experiment Setting} The pre-trained one-step generator remains consistent with that employed in the prior experiments. We employ the IP-Adapter~\cite{ye2023ip-adapter} architecture to serve as the adapter for injecting image prompts. Following IP-Adapter, we use OpenCLIP ViT-H/14 as the image encoder..

\spara{Quantitative Comparison} Our method is quantitatively benchmarked against the original IP-Adapter. Following IP-Adapter~\cite{ye2023ip-adapter}, we generate four images conditioned on each image prompt, for every sample in the COCO-2017-5k dataset~\cite{lin2014microsoft}. Alignment with the image condition is assessed using two established metrics: 1) CLIP-I: The cosine similarity between the CLIP image embeddings of the generated images and the respective image prompt; 2) CLIP-T: The CLIP Score measuring the similarity between the generated images and the captions corresponding to the image prompts.
The quantitative results, summarized in \cref{tab:ip_adapter}, reveal that our Noise Consistency Training (NCT) method achieves performance comparable to the original IP-Adapter (which necessitates 100 NFEs) on both CLIP-I and CLIP-T metrics. Crucially, NCT attains this level of performance with only a single NFE, signifying an approximate 100-fold improvement in computational efficiency.

\spara{Multi-modal Prompts} Our investigations indicate that NCT can concurrently process both image and textual prompts. \cref{fig:ip_demo} illustrates generation outcomes achieved through the use of such multimodal inputs. As demonstrated, the integration of supplementary text prompts facilitates the generation of more diverse visual outputs. This allows for capabilities such as attribute modification and scene alteration based on textual descriptions, relative to the content of the primary image prompt.

\spara{Structure Control} We observe that NCT permits the test-time compatibility of adapters designed for image prompting with those designed for controllable generation. This enables the generation of images based on image prompts while jointly incorporating additional structural or conditional controls, as shown in \cref{fig:ip_demo}. Such test-time compatibility underscores the inherent flexibility and potential of NCT for training distinct adapters for a one-step generator, which can subsequently be combined effectively during the inference stage.

\begin{figure*}[!t]
    \centering
    \begin{minipage}[c]{0.57\linewidth}
        \centering
        \includegraphics[width=\linewidth]{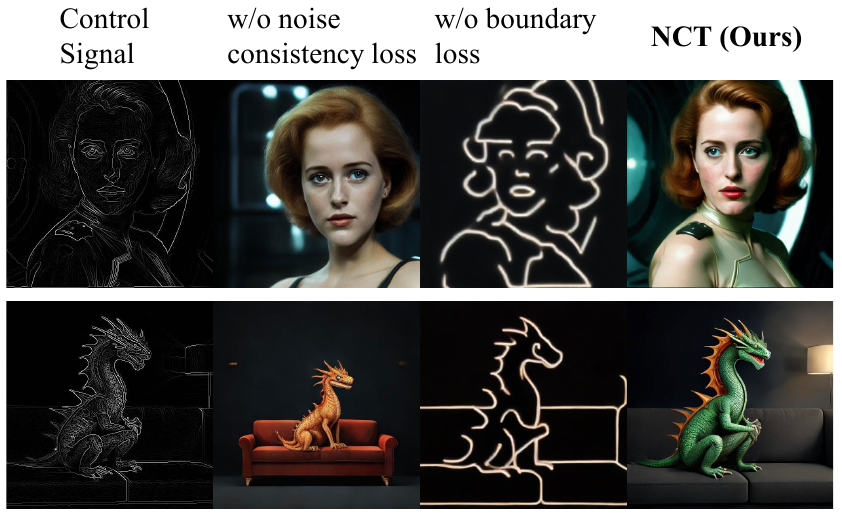}
        \caption{Both boundary loss and noise consistency loss are crucial to our NCT. Without Boundary loss, the model's distribution collapses. Without noise consistency loss, the model ignores the injected condition.}
        \label{fig:ablation_image}
    \end{minipage}
    \hfill 
    \begin{minipage}[c]{0.4\linewidth}
        \centering
        \begin{minipage}{\linewidth}
            \centering
            \resizebox{\linewidth}{!}{%
                \begin{tabular}{lccc}
                \toprule
                Method & NFE$\downarrow$ & Clip-T$\uparrow$ & Clip-I$\uparrow$ \\
                \midrule
                IP-Adapter$^\dagger$ & 100 & 0.588 & 0.828 \\
                Ours & 1 & 0.593 & 0.821 \\
                \bottomrule
                \end{tabular}%
            }
            \captionof{table}{Comparison of machine metrics of different methods regarding image-prompted generation. The mark $^\dagger$ denotes that the result is taken from the official report.}
            \label{tab:ip_adapter}
        \end{minipage}

        \vspace{\medskipamount}

        \begin{minipage}{\linewidth}
            \centering
            \resizebox{\linewidth}{!}{%
                \begin{tabular}{lcc}
                \toprule
                Method & FID$\downarrow$ & Con.$\downarrow$ \\
                \midrule
                Ours & \textbf{13.67} & \textbf{0.110} \\
                \midrule
                w/o noise consistency loss & 20.56 & 0.165 \\
                w/o boundary loss & 216.93 & 0.113 \\
                w/o primal-dual & 14.13 & 0.117 \\
                \bottomrule
                \end{tabular}%
            }
            \captionof{table}{Ablation study on proposed components in our NCT.}
            \label{tab:ablation}
        \end{minipage}
    \end{minipage}
\end{figure*}

\subsection{Ablation Study}

\spara{The Effect of Noise Consistency Loss} The noise consistency loss is crucial to force adapter $\phi$ to learn condition $\rc$. Without the loss, it can be seen that the consistency metric degrades severely, and the generated samples do not follow the condition at all. This is because the adapter $\phi$ is trained on fully-coupled $(\rvz,\rc)$ pairs, allowing it find find a shortcut solution that directly ignores the learnable parameters to satisfy the boundary loss.

\spara{The Effect of Boundary Loss} The boundary loss can constrain the output of the generator $f_{\theta,\phi}$ in the image domain. Without the loss, although the generator can still learns some conditions, its generated samples entirely collapse as indicated by the FID and visual samples. 

\spara{The Effect of Primal-Dual} We use primal-dual since it is crafted for solving the constrained problem, while it owns theoretical guarantees and dynamically balances the noise consistency loss and boundary loss. We empirically validate its effectiveness, it can be seen that without primal-dual, the performance slightly degrades regarding both fidelity and condition alignment.

\vspace{-2mm}
\section{Conclusion}
\vspace{-1mm}
This paper addressed the critical challenge of efficiently incorporating new controls into pre-trained one-step generative models, a key bottleneck in the rapidly evolving field of AIGC. We introduced Noise Consistency Training (NCT), a novel and lightweight approach that empowers existing one-step generators with new conditioning capabilities without the need for retraining the base diffusion model or additional diffusion distillation or accessing the original training dataset. By operating in the noise space and leveraging a carefully formulated noise-space consistency loss, NCT effectively aligns the adapted generator with the desired control signals.  Our proposed NCT framework offers significant advantages in terms of modularity, data efficiency, and ease of deployment. The experimental results across diverse control scenarios robustly demonstrate that NCT achieves state-of-the-art performance in controllable, single-step generation. It surpasses existing multi-step and distillation-based methods in both the quality of the generated content and computational efficiency.

\bibliographystyle{unsrt}
\bibliography{main}

\begin{thebibliography}{10}

\bibitem{ho2020denoising}
Jonathan Ho, Ajay Jain, and Pieter Abbeel.
\newblock Denoising diffusion probabilistic models.
\newblock {\em Advances in neural information processing systems}, 33:6840--6851, 2020.

\bibitem{sohl2015deep}
Jascha Sohl-Dickstein, Eric Weiss, Niru Maheswaranathan, and Surya Ganguli.
\newblock Deep unsupervised learning using nonequilibrium thermodynamics.
\newblock In {\em International conference on machine learning}, pages 2256--2265. PMLR, 2015.

\bibitem{luo2023diff}
Weijian Luo, Tianyang Hu, Shifeng Zhang, Jiacheng Sun, Zhenguo Li, and Zhihua Zhang.
\newblock Diff-instruct: A universal approach for transferring knowledge from pre-trained diffusion models.
\newblock {\em Advances in Neural Information Processing Systems}, 36, 2023.

\bibitem{yoso}
Yihong Luo, Xiaolong Chen, Xinghua Qu, Tianyang Hu, and Jing Tang.
\newblock You only sample once: Taming one-step text-to-image synthesis by self-cooperative diffusion gans, 2024.

\bibitem{yin2023one}
Tianwei Yin, Micha{\"e}l Gharbi, Richard Zhang, Eli Shechtman, Fredo Durand, William~T Freeman, and Taesung Park.
\newblock One-step diffusion with distribution matching distillation.
\newblock {\em arXiv preprint arXiv:2311.18828}, 2023.

\bibitem{consistency_model}
Yang Song, Prafulla Dhariwal, Mark Chen, and Ilya Sutskever.
\newblock Consistency models, 2023.

\bibitem{zhou2025inductive}
Linqi Zhou, Stefano Ermon, and Jiaming Song.
\newblock Inductive moment matching.
\newblock {\em arXiv preprint arXiv:2503.07565}, 2025.

\bibitem{song2024sdxs}
Xuanwu~Yin Yuda~Song, Zehao~Sun.
\newblock Sdxs: Real-time one-step latent diffusion models with image conditions.
\newblock {\em arxiv}, 2024.

\bibitem{luo2025adding}
Yihong Luo, Tianyang Hu, Yifan Song, Jiacheng Sun, Zhenguo Li, and Jing Tang.
\newblock Adding additional control to one-step diffusion with joint distribution matching.
\newblock {\em arXiv preprint arXiv:2503.06652}, 2025.

\bibitem{mou2023t2i}
Chong Mou, Xintao Wang, Liangbin Xie, Jian Zhang, Zhongang Qi, Ying Shan, and Xiaohu Qie.
\newblock T2i-adapter: Learning adapters to dig out more controllable ability for text-to-image diffusion models.
\newblock {\em arXiv preprint arXiv:2302.08453}, 2023.

\bibitem{ho2022classifier}
Jonathan Ho and Tim Salimans.
\newblock Classifier-free diffusion guidance.
\newblock {\em arXiv preprint arXiv:2207.12598}, 2022.

\bibitem{bansal2024universal}
Arpit Bansal, Hong-Min Chu, Avi Schwarzschild, Roni Sengupta, Micah Goldblum, Jonas Geiping, and Tom Goldstein.
\newblock Universal guidance for diffusion models.
\newblock In {\em The Twelfth International Conference on Learning Representations}, 2024.

\bibitem{ma2023elucidating}
Jiajun Ma, Tianyang Hu, Wenjia Wang, and Jiacheng Sun.
\newblock Elucidating the design space of classifier-guided diffusion generation.
\newblock {\em arXiv preprint arXiv:2310.11311}, 2023.

\bibitem{luo2025reward}
Yihong Luo, Tianyang Hu, Weijian Luo, Kenji Kawaguchi, and Jing Tang.
\newblock Reward-instruct: A reward-centric approach to fast photo-realistic image generation.
\newblock {\em arXiv preprint arXiv:2503.13070}, 2025.

\bibitem{zhang2023adding}
Lvmin Zhang, Anyi Rao, and Maneesh Agrawala.
\newblock Adding conditional control to text-to-image diffusion models.
\newblock In {\em Proceedings of the IEEE/CVF International Conference on Computer Vision}, pages 3836--3847, 2023.

\bibitem{gretton2012kernel}
Arthur Gretton, Karsten~M Borgwardt, Malte~J Rasch, Bernhard Sch{\"o}lkopf, and Alexander Smola.
\newblock A kernel two-sample test.
\newblock {\em The Journal of Machine Learning Research}, 13(1):723--773, 2012.

\bibitem{muller1997integral}
Alfred M{\"u}ller.
\newblock Integral probability metrics and their generating classes of functions.
\newblock {\em Advances in applied probability}, 29(2):429--443, 1997.

\bibitem{lu2023dpm}
Cheng Lu, Yuhao Zhou, Fan Bao, Jianfei Chen, Chongxuan Li, and Jun Zhu.
\newblock Dpm-solver++: Fast solver for guided sampling of diffusion probabilistic models.
\newblock {\em arXiv preprint arXiv:2211.01095}, 2022.

\bibitem{zhao2023unipc}
Wenliang Zhao, Lujia Bai, Yongming Rao, Jie Zhou, and Jiwen Lu.
\newblock Unipc: A unified predictor-corrector framework for fast sampling of diffusion models.
\newblock {\em Advances in Neural Information Processing Systems}, 36:49842--49869, 2023.

\bibitem{xue2024accelerating}
Shuchen Xue, Zhaoqiang Liu, Fei Chen, Shifeng Zhang, Tianyang Hu, Enze Xie, and Zhenguo Li.
\newblock Accelerating diffusion sampling with optimized time steps.
\newblock In {\em Proceedings of the IEEE/CVF Conference on Computer Vision and Pattern Recognition}, pages 8292--8301, 2024.

\bibitem{si2024freeu}
Chenyang Si, Ziqi Huang, Yuming Jiang, and Ziwei Liu.
\newblock Freeu: Free lunch in diffusion u-net.
\newblock In {\em Proceedings of the IEEE/CVF Conference on Computer Vision and Pattern Recognition}, pages 4733--4743, 2024.

\bibitem{ma2024surprising}
Jiajun Ma, Shuchen Xue, Tianyang Hu, Wenjia Wang, Zhaoqiang Liu, Zhenguo Li, Zhi-Ming Ma, and Kenji Kawaguchi.
\newblock The surprising effectiveness of skip-tuning in diffusion sampling.
\newblock {\em arXiv preprint arXiv:2402.15170}, 2024.

\bibitem{luhman2021knowledge}
Eric Luhman and Troy Luhman.
\newblock Knowledge distillation in iterative generative models for improved sampling speed.
\newblock {\em arXiv preprint arXiv:2101.02388}, 2021.

\bibitem{on_distill}
Chenlin Meng, Robin Rombach, Ruiqi Gao, Diederik Kingma, Stefano Ermon, Jonathan Ho, and Tim Salimans.
\newblock On distillation of guided diffusion models.
\newblock In {\em Proceedings of the IEEE/CVF Conference on Computer Vision and Pattern Recognition (CVPR)}, pages 14297--14306, June 2023.

\bibitem{salimans2021progressive}
Tim Salimans and Jonathan Ho.
\newblock Progressive distillation for fast sampling of diffusion models.
\newblock In {\em International Conference on Learning Representations}, 2022.

\bibitem{song2023consistency}
Yang Song, Prafulla Dhariwal, Mark Chen, and Ilya Sutskever.
\newblock Consistency models.
\newblock 2023.

\bibitem{song2024improved}
Yang Song and Prafulla Dhariwal.
\newblock Improved techniques for training consistency models.
\newblock In {\em The Twelfth International Conference on Learning Representations}, 2024.

\bibitem{yan2024perflow}
Hanshu Yan, Xingchao Liu, Jiachun Pan, Jun~Hao Liew, Qiang Liu, and Jiashi Feng.
\newblock Perflow: Piecewise rectified flow as universal plug-and-play accelerator.
\newblock {\em arXiv preprint arXiv:2405.07510}, 2024.

\bibitem{sid}
Mingyuan Zhou, Huangjie Zheng, Zhendong Wang, Mingzhang Yin, and Hai Huang.
\newblock Score identity distillation: Exponentially fast distillation of pretrained diffusion models for one-step generation.
\newblock In {\em International Conference on Machine Learning}, 2024.

\bibitem{xiao2024ccm}
Jie Xiao, Kai Zhu, Han Zhang, Zhiheng Liu, Yujun Shen, Zhantao Yang, Ruili Feng, Yu~Liu, Xueyang Fu, and Zheng-Jun Zha.
\newblock {CCM}: Real-time controllable visual content creation using text-to-image consistency models.
\newblock In {\em Forty-first International Conference on Machine Learning}, 2024.

\bibitem{dmd2}
Tianwei Yin, Michaël Gharbi, Taesung Park, Richard Zhang, Eli Shechtman, Fredo Durand, and William~T. Freeman.
\newblock Improved distribution matching distillation for fast image synthesis, 2024.

\bibitem{luo2025tdm}
Yihong Luo, Tianyang Hu, Jiacheng Sun, Yujun Cai, and Jing Tang.
\newblock Learning few-step diffusion models by trajectory distribution matching, 2025.

\bibitem{liu2023insta}
Xingchao Liu, Xiwen Zhang, Jianzhu Ma, Jian Peng, and Qiang Liu.
\newblock Instaflow: One step is enough for high-quality diffusion-based text-to-image generation.
\newblock {\em arXiv preprint arXiv:2309.06380}, 2023.

\bibitem{chamon2021constrained}
Luiz~FO Chamon, Santiago Paternain, Miguel Calvo-Fullana, and Alejandro Ribeiro.
\newblock Constrained learning with non-convex losses.
\newblock {\em arXiv:2103.05134}, 2021.

\bibitem{rombach2022high}
Robin Rombach, Andreas Blattmann, Dominik Lorenz, Patrick Esser, and Bj{\"o}rn Ommer.
\newblock High-resolution image synthesis with latent diffusion models.
\newblock In {\em Proceedings of the IEEE/CVF conference on computer vision and pattern recognition}, pages 10684--10695, 2022.

\bibitem{canny1986computational}
John Canny.
\newblock A computational approach to edge detection.
\newblock {\em PAMI}, 1986.

\bibitem{xie2015holistically}
Saining Xie and Zhuowen Tu.
\newblock Holistically-nested edge detection.
\newblock In {\em ICCV}, 2015.

\bibitem{heusel2017gans}
Martin Heusel, Hubert Ramsauer, Thomas Unterthiner, Bernhard Nessler, and Sepp Hochreiter.
\newblock Gans trained by a two time-scale update rule converge to a local nash equilibrium.
\newblock {\em Advances in neural information processing systems}, 30, 2017.

\bibitem{ye2023ip-adapter}
Hu~Ye, Jun Zhang, Sibo Liu, Xiao Han, and Wei Yang.
\newblock Ip-adapter: Text compatible image prompt adapter for text-to-image diffusion models.
\newblock 2023.

\bibitem{lin2014microsoft}
Tsung-Yi Lin, Michael Maire, Serge Belongie, James Hays, Pietro Perona, Deva Ramanan, Piotr Doll{\'a}r, and C~Lawrence Zitnick.
\newblock Microsoft coco: Common objects in context.
\newblock In {\em Computer Vision--ECCV 2014: 13th European Conference, Zurich, Switzerland, September 6-12, 2014, Proceedings, Part V 13}, pages 740--755. Springer, 2014.

\end{thebibliography}

\newpage
\appendix

\counterwithin{theorem}{section}
\counterwithin{lemma}{section}
\setcounter{defn}{0}
\section{Theoretical Foundation of Noise Consistency Training}
\label{app:theory}
This section establishes the theoretical foundation: it begins with definitions and the mathematical setup (A.1), then introduces key lemmas (A.2) that collectively build the necessary mathematical foundation—by defining critical distribution relationships and an input interpolation path—for formulating the conditions and proving the main theorem (A.3). 
Specifically, \cref{lemma:diffuse_joint}, \cref{lemma:connection}, and \cref{thm:main theorem} presented in the main paper are proved in \cref{lemma:interpolation}, \cref{rmk:lm2}, and \cref{thm:inductive_training_single_phi_sum_loss} respectively in this section.

\subsection{Definition and Setup}
\begin{itemize}
    \item \textbf{Latent Distribution:} The latent distribution $\Gamma$ is the standard Gaussian measure on $(\sR^m, \mathcal{B}(\sR^m))$. The measure $\Gamma$ has density $\gamma(\rvz)$ w.r.t. $\mathrm{d}\rvz$, so $\mathrm{d}\Gamma(\rvz) = \gamma(\rvz) \mathrm{d}\rvz$. $\Gamma$ is a probability measure: $\int_{\sR^m} \gamma(\rvz) \mathrm{d}\rvz = 1$.

    \item \textbf{Implicit Generator:} $f_{\theta}: \sR^m \rightarrow \sR^n$ is a measurable function.

    \item \textbf{Data Distribution:} $P_{\theta}$ on $(\sR^n, \mathcal{B}(\sR^n))$ is the push-forward $P_{\theta} = {f_{\theta}}_{\#}\Gamma$. It has density $p(\rvx)$ w.r.t. $\mathrm{d}\rvx$, so $\mathrm{d}P_{\theta}(\rvx) = p_{\theta}(\rvx) \mathrm{d}\rvx$. Since $\Gamma$ is a probability measure, $P_{\theta}$ is also a probability measure: $\int_{\sR^n} p_{\theta}(\rvx) \mathrm{d}\rvx = 1$.

    \item \textbf{Condition:}. Let $(\gC, \mathcal{B}_{\gC}, \mu_{\gC})$ be a measure space for the conditions. $\mathcal{B}_{\gC}$ is a $\sigma$-algebra on $\gC$, and $\mu_{\gC}$ is a reference measure (e.g., Lebesgue measure if $\gC = \sR^k$ (such as Canny Edge), or counting measure if $\gC$ is discrete) (such as class labels). For each $\rvx \in \sR^n$, $p(\cdot|\rvx)$ is a probability measure on $(\gC, \mathcal{B}_{\gC})$. We assume it has a density $p(\rc|\rvx)$ with respect to $\mu_{\gC}$. Thus, for any $\rvx \in \sR^n$: $\int_{\gC} p(\rc|\rvx) \mathrm{d}\mu_{\gC}(\rc) = 1$.

    \item \textbf{Combined Map:} $T = f_{\theta} \times \text{id} : \sR^m \times \gC \to \sR^n \times \gC$, $T(\rvz, \rc) = (f_{\theta}(\rvz), \rc)$. Since $f_{\theta}$ and $\text{id}$ are measurable, $T$ is measurable with respect to the product $\sigma$-algebras $\mathcal{B}(\sR^m) \otimes \mathcal{B}_{\gC}$ and $\mathcal{B}(\sR^n) \otimes \mathcal{B}_{\gC}$.

    \item \textbf{Implicit Generator with Condition:} $f_{\theta, \phi}: \sR^m \times \gC \to \sR^n$ be a measurable function.

    \item \textbf{Combined Map with Condition:} We define a new map $T_{\phi}: \sR^m \times \gC \to \sR^n \times \gC$ as $T_{\phi}(\rvz, \rc) = (f_{\theta, \phi}(\rvz, \rc), \rc)$.

    \item \textbf{Marginal Condition Density:} We define the marginal probability density $p(\rc)$ of the condition $\rc$ as:
        \[
            p(\rc) = \int_{\sR^n} p_{\theta}(\rvx) p(\rc|\rvx) \mathrm{d}\rvx  = \int_{\sR^m} \gamma(\rvz) p(\rc|f_{\theta}(\rvz)) \mathrm{d}\rvz
        \]
        This is a probability density with respect to $\mu_{\gC}$, i.e., $\int_{\gC} p(\rc) \mathrm{d}\mu_{\gC}(\rc) = 1$.
    \item \textbf{Initial Coupled Latent-Condition Distribution}: Density $\nu(\rvz, \rc) = \gamma(\rvz) p(\rc|f_{\theta}(\rvz))$ w.r.t. $\mathrm{d}\rvz \mathrm{d}\mu_{\gC}(\rc)$.
    \item \textbf{Independent Latent-Condition Coupling:} We define the probability measure $\rho$ on the input space $(\sR^m \times \gC, \mathcal{B}(\sR^m) \otimes \mathcal{B}_{\gC})$ by its density with respect to the reference measure $\mathrm{d}\rvz \, \mathrm{d}\mu_{\gC}(\rc)$:
        \[
            \mathrm{d}\rho(\rvz, \rc) = \gamma(\rvz) p(\rc) \mathrm{d}\rvz \, \mathrm{d}\mu_{\gC}(\rc)
        \]
        Here, $\gamma(\rvz)$ is the density of the standard Gaussian measure $\Gamma$ on $\sR^m$. The measure $\rho$ corresponds to sampling $\rvz \sim \Gamma$ independently from sampling $\rc \sim p(\rc)$.
    \item \textbf{Target Data-Condition Distribution $\eta$}: Density $p_{\eta}(\rvx, \rc) = p_{\theta}(\rvx)p(\rc|\rvx)$ w.r.t. $\mathrm{d}\rvx \mathrm{d}\mu_{\gC}(\rc)$.
    \item \textbf{MMD (Maximum Mean Discrepancy)}: $\MMD^2(P, Q)$ is a metric between probability distributions $P$ and $Q$. For a characteristic kernel, $\MMD^2(P, Q) = 0 \iff P=Q$.
\end{itemize}

\subsection{Lemmas}

\begin{lemma}
Let $\Gamma$ be the standard Gaussian measure on $\sR^m$ with density $\gamma(\rvz)$ with respect to the Lebesgue measure $\mathrm{d}\rvz$. Let $f_{\theta}: \sR^m \to \sR^n$ be a measurable function, and let $P_{\theta} = {f_{\theta}}_\#\Gamma$ be the push-forward measure on $\sR^n$, assumed to have a density $p_{\theta}(\rvx)$ with respect to the Lebesgue measure $\mathrm{d}\rvx$. Let $(\gC, \mathcal{B}_{\gC}, \mu_{\gC})$ be a measure space for conditions, and let $p(\rc|\rvx)$ be a conditional probability density on $\gC$ with respect to $\mu_{\gC}$ for each $\rvx \in \sR^n$, such that $\int_{\gC} p(\rc|\rvx) \mathrm{d}\mu_{\gC}(\rc) = 1$.

Define the measure $\nu$ on $\sR^m \times \gC$ by its density with respect to $\mathrm{d}\rvz \mathrm{d}\mu_{\gC}(\rc)$:
\[ \mathrm{d}\nu(\rvz, \rc) = \gamma(\rvz) p(\rc|f_{\theta}(\rvz)) \mathrm{d}\rvz \mathrm{d}\mu_{\gC}(\rc) \]
Define the map $T = f_{\theta} \times \text{id} : \sR^m \times \gC \to \sR^n \times \gC$ by $T(\rvz, \rc) = (f_{\theta}(\rvz), \rc)$.

Then the push-forward measure $T_\#\nu$ on $\sR^n \times \gC$ has the density $p_{\theta}(\rvx) p(\rc|\rvx)$ with respect to $\mathrm{d}\rvx \mathrm{d}\mu_{\gC}(\rc)$. That is,
\[ (f_{\theta} \times \text{id})_\# (\gamma(\rvz)p(\rc|f_{\theta}(\rvz)) \mathrm{d}\rvz \mathrm{d}\mu_{\gC}(\rc)) = p_{\theta}(\rvx)p(\rc|\rvx) \mathrm{d}\rvx \mathrm{d}\mu_{\gC}(\rc) \]
\end{lemma}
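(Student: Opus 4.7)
The plan is to verify the push-forward identity on measurable rectangles $A \times B$ with $A \in \mathcal{B}(\sR^n)$ and $B \in \mathcal{B}_{\gC}$, and then extend to the full product $\sigma$-algebra by a standard uniqueness argument. The essential structural observation is that because $T = f_\theta \times \mathrm{id}$ acts on the two coordinates independently, $T^{-1}(A \times B) = f_\theta^{-1}(A) \times B$; thus the push-forward computation on a rectangle reduces to a single application of the change-of-variables formula for $f_\theta$ in the $\rvz$ slot, while the $\rc$ slot is carried along unchanged.

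First I would write $T_\#\nu(A \times B) = \nu(f_\theta^{-1}(A) \times B)$, expand using the density of $\nu$, and apply Tonelli (all integrands are non-negative) to split the integral as
\[
\int_B \left( \int_{f_\theta^{-1}(A)} \gamma(\rvz)\, p(\rc \mid f_\theta(\rvz))\, \mathrm{d}\rvz \right) \mathrm{d}\mu_{\gC}(\rc).
\]
Next, fixing $\rc \in B$, I would rewrite the inner integral as $\int_{\sR^m} \mathbf{1}_A(f_\theta(\rvz))\, p(\rc \mid f_\theta(\rvz))\, \mathrm{d}\Gamma(\rvz)$ and invoke the change-of-variables formula for the push-forward $P_\theta = (f_\theta)_\#\Gamma$ applied to the function $\rvx \mapsto \mathbf{1}_A(\rvx)\, p(\rc \mid \rvx)$. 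Using that $P_\theta$ has density $p_\theta$ against Lebesgue measure, this rewrites the inner integral as $\int_A p(\rc \mid \rvx)\, p_\theta(\rvx)\, \mathrm{d}\rvx$. Combining with the outer integral and applying Tonelli once more yields $T_\#\nu(A \times B) = \int_{A \times B} p_\theta(\rvx)\, p(\rc \mid \rvx)\, \mathrm{d}\rvx\, \mathrm{d}\mu_{\gC}(\rc)$, which is exactly the claim on rectangles.

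To upgrade from rectangles to all product-measurable sets, I would note that the rectangles form a $\pi$-system generating $\mathcal{B}(\sR^n) \otimes \mathcal{B}_{\gC}$ and that both $T_\#\nu$ and the candidate measure with density $p_\theta(\rvx)\, p(\rc\mid\rvx)$ are finite (in fact probability) measures, so Dynkin's $\pi$-$\lambda$ theorem gives their equality on the full product $\sigma$-algebra and identifies the density as claimed. The main subtlety is measure-theoretic bookkeeping rather than a genuine obstacle: one must ensure joint measurability of $(\rvz, \rc) \mapsto p(\rc \mid f_\theta(\rvz))$ so that Tonelli applies, and measurability of $\rvx \mapsto p(\rc \mid \rvx)$ for each fixed $\rc$ so that the change-of-variables step is legitimate. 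Both follow from the standing hypotheses that $f_\theta$ is measurable and that $p(\rc \mid \rvx)$ is a jointly measurable conditional density, so no extra assumption is needed; I would flag these regularity points at the outset of the proof to keep the main chain of equalities clean.
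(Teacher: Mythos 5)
Your proof is correct and follows essentially the same route as the paper's: both reduce the claim to the change-of-variables identity for $P_{\theta} = (f_{\theta})_{\#}\Gamma$ combined with a Fubini--Tonelli interchange. The only difference is packaging --- the paper tests against arbitrary bounded measurable functions $\Phi$ (integrating out $\rc$ first via an auxiliary function $g$), whereas you test on rectangles $A \times B$ and extend by the $\pi$-$\lambda$ theorem, which lets you use Tonelli on non-negative integrands instead of justifying Fubini for bounded ones.
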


\begin{proof}
We want to show $T_\#\nu = \eta$. By definition of equality of measures, it suffices to show that for any bounded, measurable test function $\Phi: \sR^n \times \gC \to \sR$:
\[ \int_{\sR^n \times \gC} \Phi(\rvx, \rc) \mathrm{d}(T_\#\nu)(\rvx, \rc) = \int_{\sR^n \times \gC} \Phi(\rvx, \rc) \mathrm{d}\eta(\rvx, \rc) \]

We start with the left-hand side (LHS). Using the change of variables formula for push-forward measures:
\begin{align*}
    LHS &= \int_{\sR^n \times \gC} \Phi(\rvx, \rc) \mathrm{d}(T_\#\nu)(\rvx, \rc) \\
    &= \int_{\sR^m \times \gC} \Phi(T(\rvz, \rc)) \mathrm{d}\nu(\rvz, \rc) \quad \text{(Change of Variables)} \\
    &= \int_{\sR^m \times \gC} \Phi(f_{\theta}(\rvz), \rc) \gamma(\rvz) p(\rc|f_{\theta}(\rvz)) \mathrm{d}\rvz \mathrm{d}\mu_{\gC}(\rc) \quad \text{(Substitute } T \text{ and density of } \nu) \\
    &= \int_{\sR^m} \gamma(\rvz) \left[ \int_{\gC} \Phi(f_{\theta}(\rvz), \rc) p(\rc|f_{\theta}(\rvz)) \mathrm{d}\mu_{\gC}(\rc) \right] \mathrm{d}\rvz \quad \text{(Fubini's Theorem)}
\end{align*}
The application of Fubini's theorem is justified because $\Phi$ is bounded, $\gamma(\rvz) \ge 0$, $p(\rc|f_{\theta}(\rvz)) \ge 0$, and $\nu$ is a finite (probability) measure.

Let's define an auxiliary function $g: \sR^n \to \sR$ as:
\[ g(\rvy) = \int_{\gC} \Phi(\rvy, \rc) p(\rc|\rvy) \mathrm{d}\mu_{\gC}(\rc) \]
Since $\Phi$ is bounded (say $|\Phi| \le M$) and $\int_{\gC} p(\rc|\rvy) \mathrm{d}\mu_{\gC}(\rc) = 1$, $g(\rvy)$ is also bounded ($|g(\rvy)| \le M$). If $\Phi$ is $\mathcal{B}(\sR^n) \otimes \mathcal{B}_{\gC}$-measurable and $p(\rc|\rvy)$ defines a measurable transition kernel, then $g$ is $\mathcal{B}(\sR^n)$-measurable.

Substituting $g$ into our integral expression:
\[ LHS = \int_{\sR^m} g(f_{\theta}(\rvz)) \gamma(\rvz) \mathrm{d}\rvz \]
Now, recall the definition of the push-forward measure $P_{\theta} = f_\#\Gamma$. For any bounded, measurable function $h: \sR^n \to \sR$:
\[ \int_{\sR^n} h(\rvx) \mathrm{d}P_{\theta}(\rvx) = \int_{\sR^m} h(f_{\theta}(\rvz)) \mathrm{d}\Gamma(\rvz) \]
In terms of densities:
\[ \int_{\sR^n} h(\rvx) p_{\theta}(\rvx) \mathrm{d}\rvx = \int_{\sR^m} h(f_{\theta}(\rvz)) \gamma(\rvz) \mathrm{d}\rvz \]
Applying this identity with $h = g$:
\[ \int_{\sR^m} g(f_{\theta}(\rvz)) \gamma(\rvz) \mathrm{d}\rvz = \int_{\sR^n} g(\rvx) p_{\theta}(\rvx) \mathrm{d}\rvx \]
So,
\[ LHS = \int_{\sR^n} g(\rvx) p_{\theta}(\rvx) \mathrm{d}\rvx \]
Now, substitute back the definition of $g(\rvx)$:
\[ LHS = \int_{\sR^n} \left[ \int_{\gC} \Phi(\rvx, \rc) p(\rc|\rvx) \mathrm{d}\mu_{\gC}(\rc) \right] p_{\theta}(\rvx) \mathrm{d}\rvx \]
Applying Fubini's Theorem again (justified as before):
\[ LHS = \int_{\sR^n \times \gC} \Phi(\rvx, \rc) p_{\theta}(\rvx) p(\rc|\rvx) \mathrm{d}\rvx \mathrm{d}\mu_{\gC}(\rc) \]
This is precisely the integral with respect to the target measure $\eta$:
\[ LHS = \int_{\sR^n \times \gC} \Phi(\rvx, \rc) \mathrm{d}\eta(\rvx, \rc) \]
Since we have shown that $\int \Phi \mathrm{d}(T_\#\nu) = \int \Phi \mathrm{d}\eta$ for all bounded, measurable test functions $\Phi$, the measures must be equal:
\[ T_\#\nu = \eta \]
\end{proof}

\begin{lemma}[Boundary Loss]
Let $f_{\theta, \phi}: \sR^m \times \gC \to \sR^n$ be a measurable function. Let $\nu$ be the measure on $\sR^m \times \gC$ with density $\gamma(\rvz) p(\rc|f_{\theta}(\rvz))$ w.r.t. $\mathrm{d}\rvz \mathrm{d}\mu_{\gC}(\rc)$. Let $d$ be a distance metric on $\sR^n$.
If the boundary loss
\[ \mathbb{E}_{(\rvz, \rc) \sim \nu}[d(f_{\theta, \phi}(\rvz,\rc) , f_{\theta}(\rvz))] = 0 \]
then:

The push-forward measure $\eta_{\phi} = (T_{\phi})_{\#}\nu$ is equal to the target measure $\eta = T_{\#}\nu$. This means the joint distribution $p_{\theta, \phi}(\rvx, \rc)$ induced by $f_{\theta, \phi}$ is $p_{\theta}(\rvx)p(\rc|\rvx)$, i.e., $\eta_{\phi} = \eta$.
\end{lemma}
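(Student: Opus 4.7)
The plan is to reduce the claim to the previous lemma by showing that the boundary loss vanishing forces $f_{\theta,\phi}(\rvz,\rc) = f_{\theta}(\rvz)$ for $\nu$-almost every $(\rvz,\rc)$, which in turn makes the two combined maps $T_{\phi}$ and $T$ agree $\nu$-a.s., so they induce the same push-forward measure.

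First, I would use the standard fact that the integral of a non-negative measurable function against a probability measure is zero only if the function vanishes almost everywhere. Since $d$ is a distance metric, the integrand $d(f_{\theta,\phi}(\rvz,\rc), f_{\theta}(\rvz))$ is non-negative, and its vanishing pointwise is equivalent to $f_{\theta,\phi}(\rvz,\rc) = f_{\theta}(\rvz)$. Hence the hypothesis
\[
\int_{\sR^m \times \gC} d(f_{\theta,\phi}(\rvz,\rc), f_{\theta}(\rvz))\, \mathrm{d}\nu(\rvz,\rc) = 0
\]
gives a $\nu$-null set $N \subset \sR^m \times \gC$ outside of which $f_{\theta,\phi}(\rvz,\rc) = f_{\theta}(\rvz)$.

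Next, I would observe that on the complement of $N$,
\[
T_{\phi}(\rvz,\rc) = (f_{\theta,\phi}(\rvz,\rc), \rc) = (f_{\theta}(\rvz), \rc) = T(\rvz,\rc),
\]
so $T_{\phi} = T$ holds $\nu$-a.s. Then for any bounded measurable test function $\Phi : \sR^n \times \gC \to \sR$, the change-of-variables formula for push-forwards gives
\[
\int \Phi \, \mathrm{d}(T_{\phi})_{\#}\nu = \int \Phi \circ T_{\phi}\, \mathrm{d}\nu = \int \Phi \circ T\, \mathrm{d}\nu = \int \Phi\, \mathrm{d} T_{\#}\nu,
\]
where the middle equality uses that $\Phi \circ T_{\phi} = \Phi \circ T$ on the complement of the $\nu$-null set $N$. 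Since this holds for every such $\Phi$, we conclude $(T_{\phi})_{\#}\nu = T_{\#}\nu$.

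Finally, I would invoke the preceding lemma, which identifies $T_{\#}\nu$ with the measure $\eta$ having density $p_{\theta}(\rvx) p(\rc|\rvx)$ with respect to $\mathrm{d}\rvx\, \mathrm{d}\mu_{\gC}(\rc)$. Combining the two displays yields $\eta_{\phi} = (T_{\phi})_{\#}\nu = T_{\#}\nu = \eta$, which is the desired conclusion. There is no real obstacle here — the only subtlety is being careful about measurability and the fact that push-forwards are insensitive to modifications on null sets, both of which are routine.
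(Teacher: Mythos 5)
Your proposal is correct and follows essentially the same route as the paper's own proof: both arguments deduce $f_{\theta,\phi}(\rvz,\rc)=f_{\theta}(\rvz)$ $\nu$-a.e. from the vanishing of the non-negative integrand, conclude $T_{\phi}=T$ $\nu$-a.e., verify equality of the push-forwards via bounded measurable test functions, and then invoke the preceding lemma to identify $T_{\#}\nu$ with $\eta$. No gaps.
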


\begin{proof}
The condition is $\mathbb{E}_{(\rvz, \rc) \sim \nu}[d(f_{\theta, \phi}(\rvz,\rc) , f_{\theta}(\rvz))] = 0$.
Since $d(a,b) \ge 0$ for any $a,b \in \sR^n$, and $d(a,b)=0$ if and only if $a=b$, the expectation of this non-negative quantity being zero implies that the integrand must be zero $\nu$-almost everywhere.
That is,
\[ d(f_{\theta, \phi}(\rvz,\rc) , f_{\theta}(\rvz)) = 0 \quad \text{for } \nu\text{-a.e. } (\rvz, \rc) \]
This implies
\[ f_{\theta, \phi}(\rvz, \rc) = f_{\theta}(\rvz) \quad \text{for } \nu\text{-a.e. } (\rvz, \rc) \]

We want to show that $\eta_{\phi} = (T_{\phi})_{\#}\nu$ is equal to $\eta = T_{\#}\nu$.
Recall the definitions of the maps:
\[ T(\rvz, \rc) = (f_{\theta}(\rvz), \rc) \]
\[ T_{\phi}(\rvz, \rc) = (f_{\theta, \phi}(\rvz, \rc), \rc) \]
Since $f_{\theta, \phi}(\rvz, \rc) = f_{\theta}(\rvz)$ for $\nu$-a.e. $(\rvz, \rc)$, it follows directly that the maps $T_{\phi}$ and $T$ are equal $\nu$-almost everywhere:
\[ T_{\phi}(\rvz, \rc) = (f_{\theta, \phi}(\rvz, \rc), \rc) = (f_{\theta}(\rvz), \rc) = T(\rvz, \rc) \quad \text{for } \nu\text{-a.e. } (\rvz, \rc) \]
If two measurable maps $T$ and $T_{\phi}$ are equal $\nu$-a.e., their push-forward measures $T_{\#}\nu$ and $(T_{\phi})_{\#}\nu$ are identical. Let $\Psi: \sR^n \times \gC \to \sR$ be any bounded, measurable test function.
\begin{align*}
    \int_{\sR^n \times \gC} \Psi(\rvx, \rc) \mathrm{d}((T_{\phi})_{\#}\nu)(\rvx, \rc) &= \int_{\sR^m \times \gC} \Psi(T_{\phi}(\rvz, \rc)) \mathrm{d}\nu(\rvz, \rc) \\
    &= \int_{\sR^m \times \gC} \Psi(T(\rvz, \rc)) \mathrm{d}\nu(\rvz, \rc) \quad (\text{since } T_{\phi}=T \text{ } \nu\text{-a.e. and } \Psi \text{ is bounded}) \\
    &= \int_{\sR^n \times \gC} \Psi(\rvx, \rc) \mathrm{d}(T_{\#}\nu)(\rvx, \rc)
\end{align*}
Since this holds for all bounded measurable $\Psi$, we have $(T_{\phi})_{\#}\nu = T_{\#}\nu$.
From Lemma 1, we know $T_{\#}\nu = \eta$, where $\eta$ has density $p_{\theta}(\rvx)p(\rc|\rvx)$ with respect to $\mathrm{d}\rvx \mathrm{d}\mu_{\gC}(\rc)$.
Therefore, $\eta_{\phi} = (T_{\phi})_{\#}\nu = \eta$.
\end{proof}

\begin{lemma}[Interpolation of Joint Latent-Condition Distributions (Lemma~\ref{lemma:diffuse_joint} in main paper)]
\label{lemma:interpolation}
Let $\gamma(\rvz)$ be the density of the standard Gaussian measure on $\sR^m$. Let $f_{\theta}: \sR^m \to \sR^n$ be a measurable function. Let $p(\rc|\rvx)$ be a conditional probability density on $\gC$ (with respect to a reference measure $\mu_{\gC}$) for each $\rvx \in \sR^n$.
Define the marginal condition density $p(\rc)$ as:
\[ p(\rc) = \int_{\sR^m} \gamma(\rvz') p(\rc|f_{\theta}(\rvz')) \mathrm{d}\rvz' \]
Assume $p(\rc) > 0$ for $\mu_{\gC}$-almost every $\rc$ in the support of interest.
Define the conditional latent density $p_{\text{data}}(\rvz_0|\rc)$ as:
\[ p_{\text{data}}(\rvz_0|\rc) = \frac{\gamma(\rvz_0) p(\rc|f_{\theta}(\rvz_0))}{p(\rc)} \]
Consider a time-dependent process for $t \in [0,1]$ where $\rvz_t$ is generated from $\rvz_0 \sim p_{\text{data}}(\cdot|\rc)$ by:
\[ \rvz_t = \alpha_t \rvz_0 + \sigma_t \boldsymbol{\epsilon}, \quad \text{where } \boldsymbol{\epsilon} \sim \mathcal{N}(0, I_m) \text{ independent of } \rvz_0 \text{ and } \rc. \]
The coefficients $\alpha_t, \sigma_t \in \sR$ satisfy:
\begin{itemize}
    \item $\alpha_0 = 1, \sigma_0 = 0$
    \item $\alpha_1 = 0, \sigma_1 = 1$
    \item $\alpha_t$ is monotonically decreasing, $\sigma_t$ is monotonically increasing.
\end{itemize}
Let $q_t(\rvz_t|\rvz_0) = \mathcal{N}(\rvz_t; \alpha_t \rvz_0, \sigma_t^2 I_m)$ be the density of $\rvz_t$ given $\rvz_0$.
Define the conditional density $p_t(\rvz|\rc)$ as:
\[ p_t(\rvz|\rc) = \int_{\sR^m} q_t(\rvz|\rvz_0) p_{\text{data}}(\rvz_0|\rc) \mathrm{d}\rvz_0 \]
And the joint density $p_t(\rvz, \rc)$ on $\sR^m \times \gC$ (with respect to $\mathrm{d}\rvz \mathrm{d}\mu_{\gC}(\rc)$) as:
\[ p_t(\rvz, \rc) = p_t(\rvz|\rc) p(\rc) \]
Then,
\begin{enumerate}
    \item At $t=0$, the joint density is $p_0(\rvz, \rc) = \gamma(\rvz) p(\rc|f_{\theta}(\rvz))$.
    \item At $t=1$, the joint density is $p_1(\rvz, \rc) = \gamma(\rvz) p(\rc)$.
\end{enumerate}
\end{lemma}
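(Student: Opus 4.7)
The plan is to evaluate the joint density $p_t(\rvz,\rc) = p_t(\rvz|\rc)p(\rc)$ at the two endpoints by direct substitution, exploiting the fact that the only $t$-dependent ingredient in the construction is the Gaussian transition kernel $q_t(\rvz|\rvz_0) = \mathcal{N}(\rvz;\alpha_t\rvz_0,\sigma_t^2 I_m)$, and that this kernel degenerates at both $t=0$ and $t=1$. Neither endpoint requires any nontrivial tool; each is a one-line substitution followed by either a cancellation or a normalization.

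For the $t=0$ claim, I would use that $(\alpha_0,\sigma_0)=(1,0)$ reduces $q_0(\rvz|\rvz_0)$ to the Dirac mass at $\rvz_0$. Plugging this into $p_0(\rvz|\rc) = \int q_0(\rvz|\rvz_0) p_{\text{data}}(\rvz_0|\rc)\,\mathrm{d}\rvz_0$ collapses the integral to $p_{\text{data}}(\rvz|\rc)$. Multiplying by $p(\rc)$ and unfolding the definition $p_{\text{data}}(\rvz|\rc) = \gamma(\rvz)p(\rc|f_\theta(\rvz))/p(\rc)$ cancels the $p(\rc)$ factor, leaving exactly $\gamma(\rvz)p(\rc|f_\theta(\rvz))$ as claimed.

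For the $t=1$ claim, I would use that $(\alpha_1,\sigma_1)=(0,1)$ makes $q_1(\rvz|\rvz_0) = \mathcal{N}(\rvz;0,I_m) = \gamma(\rvz)$, which no longer depends on $\rvz_0$. Pulling $\gamma(\rvz)$ out of the integral defining $p_1(\rvz|\rc)$ leaves $\int p_{\text{data}}(\rvz_0|\rc)\,\mathrm{d}\rvz_0 = 1$ since $p_{\text{data}}(\cdot|\rc)$ is a probability density, so $p_1(\rvz|\rc) = \gamma(\rvz)$. Multiplying by $p(\rc)$ then yields $\gamma(\rvz)p(\rc)$.

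The only mild obstacle is formal rigor at $t=0$, where the Gaussian transition ``density'' is a delta distribution rather than an $L^1$ function. I would handle this either informally via the standard sifting identity $\int \delta(\rvz-\rvz_0) h(\rvz_0)\,\mathrm{d}\rvz_0 = h(\rvz)$, or formally by noting that at $t=0$ the random variable $\rvz_t$ equals $\rvz_0$ almost surely, so $p_0(\rvz|\rc)$ must by definition coincide with $p_{\text{data}}(\rvz|\rc)$ without any integral manipulation. No Fubini argument, change of variables, or continuity-in-$t$ argument is needed; the monotonicity assumptions on $\alpha_t,\sigma_t$ are irrelevant for this lemma and only become relevant when one wants to think of $t \mapsto p_t(\rvz,\rc)$ as a genuine interpolating path between the two endpoint distributions.
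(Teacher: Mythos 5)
Your proposal is correct and follows essentially the same route as the paper's proof: substitute the degenerate kernel at each endpoint, use the Dirac sifting property at $t=0$, and pull the $\rvz_0$-independent Gaussian out of the integral at $t=1$. The only cosmetic difference is that the paper cancels $p(\rc)$ against the denominator of $p_{\text{data}}(\rvz_0|\rc)$ before evaluating the endpoints (so its $t=1$ integral normalizes to $p(\rc)$ rather than to $1$), which is an equivalent bookkeeping choice.
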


\begin{proof}
The joint density at time $t$ is given by $p_t(\rvz, \rc) = p_t(\rvz|\rc) p(\rc)$.
Substituting the definition of $p_t(\rvz|\rc)$:
\[ p_t(\rvz, \rc) = p(\rc) \int_{\sR^m} \mathcal{N}(\rvz; \alpha_t \rvz_0, \sigma_t^2 I_m) p_{\text{data}}(\rvz_0|\rc) \mathrm{d}\rvz_0 \]
Now, substitute the definition of $p_{\text{data}}(\rvz_0|\rc) = \frac{\gamma(\rvz_0) p(\rc|f_{\theta}(\rvz_0))}{p(\rc)}$:
\[ p_t(\rvz, \rc) = p(\rc) \int_{\sR^m} \mathcal{N}(\rvz; \alpha_t \rvz_0, \sigma_t^2 I_m) \frac{\gamma(\rvz_0) p(\rc|f_{\theta}(\rvz_0))}{p(\rc)} \mathrm{d}\rvz_0 \]
Assuming $p(\rc) \neq 0$ (for $\mu_{\gC}$-a.e. $\rc$), we can cancel $p(\rc)$:
\[ p_t(\rvz, \rc) = \int_{\sR^m} \mathcal{N}(\rvz; \alpha_t \rvz_0, \sigma_t^2 I_m) \gamma(\rvz_0) p(\rc|f_{\theta}(\rvz_0)) \mathrm{d}\rvz_0 \]

At $t=0$, we have $\alpha_0=1$ and $\sigma_0=0$.
The Gaussian density $\mathcal{N}(\rvz; \alpha_0 \rvz_0, \sigma_0^2 I_m)$ becomes $\mathcal{N}(\rvz; \rvz_0, 0 \cdot I_m)$. This is interpreted as the Dirac delta function $\delta(\rvz - \rvz_0)$.
So,
\begin{align*} p_0(\rvz, \rc) &= \int_{\sR^m} \delta(\rvz - \rvz_0) \gamma(\rvz_0) p(\rc|f_{\theta}(\rvz_0)) \mathrm{d}\rvz_0 \\ &= \gamma(\rvz) p(\rc|f_{\theta}(\rvz)) \quad \text{(by the sifting property of the Dirac delta)} \end{align*}
This matches the first target distribution.

At $t=1$, we have $\alpha_1=0$ and $\sigma_1=1$.
The Gaussian density $\mathcal{N}(\rvz; \alpha_1 \rvz_0, \sigma_1^2 I_m)$ becomes $\mathcal{N}(\rvz; 0 \cdot \rvz_0, 1^2 I_m) = \mathcal{N}(\rvz; 0, I_m)$.
By definition, $\mathcal{N}(\rvz; 0, I_m) = \gamma(\rvz)$.
So,
\begin{align*} p_1(\rvz, \rc) &= \int_{\sR^m} \gamma(\rvz) \gamma(\rvz_0) p(\rc|f_{\theta}(\rvz_0)) \mathrm{d}\rvz_0 \\ &= \gamma(\rvz) \int_{\sR^m} \gamma(\rvz_0) p(\rc|f_{\theta}(\rvz_0)) \mathrm{d}\rvz_0 \end{align*}
The integral $\int_{\sR^m} \gamma(\rvz_0) p(\rc|f_{\theta}(\rvz_0)) \mathrm{d}\rvz_0$ is, by definition, $p(\rc)$.
Therefore,
\[ p_1(\rvz, \rc) = \gamma(\rvz) p(\rc) \]
This matches the second target distribution.

Thus, the process defines an interpolation for the joint density $p_t(\rvz, \rc)$ between $p_0(\rvz, \rc) = \gamma(\rvz) p(\rc|f_{\theta}(\rvz))$ and $p_1(\rvz, \rc) = \gamma(\rvz) p(\rc)$.
\end{proof}

\subsection{Main Theorem and Proof}
\begin{defn}[Interpolation Distribution Sequence (from Lemma 3)]
A sequence of time points $0 = t_0 < t_1 < \dots < t_N = 1$. For each $t_k$, we have a latent-condition distribution $\nu_{t_k}$ (density $p_{t_k}(\rvz, \rc)$) such that $\nu_{t_0} = \nu_0$ and $\nu_{t_N} = \rho$.
\end{defn}

\begin{theorem}
\label{thm:inductive_training_single_phi_sum_loss}
Assume the distributions $\eta, \nu_0, \rho$ and the interpolation sequence $\{\nu_{t_k}\}_{k=0}^N$ as defined above.
Let $f_{\theta}: \sR^m \to \sR^n$ be a pre-trained generator, and $f_{\theta, \phi}: \sR^m \times \gC \to \sR^n$ be a conditional generator with a single set of trainable parameters $\phi$.
The map $T_{\phi}$ is defined as $T_{\phi}(\rvz, \rc) = (f_{\theta, \phi}(\rvz, \rc), \rc)$.

Consider the following two conditions:
\begin{enumerate}
    \item \textbf{Boundary Condition}:
    The parameters $\phi$ ensure the boundary loss is zero:
    \[ \E_{(\rvz, \rc) \sim \nu_{t_0}}[d(f_{\theta, \phi}(\rvz,\rc) , f_{\theta}(\rvz))] = 0 \]
    where $d(\cdot, \cdot)$ is a distance metric on $\sR^n$. By the Boundary Loss Lemma, this implies $(T_{\phi})_{\#}\nu_{t_0} = \eta$.

    \item \textbf{Consistency Condition}:
    The parameters $\phi$ also satisfy:
    \[ L_{total}(\phi) = \sum_{k=0}^{N-1} \MMD^2( (T_{\phi})_{\#}\nu_{t_{k+1}}, (T_{\phi})_{\#}\nu_{t_k} ) = 0 \]
\end{enumerate}

If such a parameter set $\phi$ exists and satisfies both conditions above, then $f_{\theta, \phi}$ (when its input is distributed according to $\rho$) generates the target data-condition distribution $\eta$:
\[ (T_{\phi})_{\#}\rho = \eta \]
That is, if $(\rvz, \rc) \sim \rho$ (i.e., $\rvz \sim \gamma(\cdot)$ and independently $\rc \sim p(\cdot)$), then $(f_{\theta, \phi}(\rvz, \rc), \rc) \sim \eta$ (i.e., its density is $p_{\theta}(\rvx)p(\rc|\rvx)$).
\end{theorem}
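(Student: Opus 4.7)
The plan is to chain three ingredients: non-negativity of squared MMD (with a characteristic kernel), the interpolation property established in \cref{lemma:interpolation}, and the Boundary Loss Lemma. The argument reduces to a telescoping identification of push-forward measures along the noise-diffusion sequence $\nu_{t_0}, \nu_{t_1}, \ldots, \nu_{t_N}$.

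First I would exploit the consistency condition. Since each summand $\MMD^2((T_\phi)_\# \nu_{t_{k+1}}, (T_\phi)_\# \nu_{t_k})$ is non-negative, the hypothesis that their sum vanishes forces every term to vanish individually. Under the standing assumption that the MMD is induced by a characteristic kernel, $\MMD(P,Q)=0$ is equivalent to $P=Q$, so this yields $(T_\phi)_\# \nu_{t_{k+1}} = (T_\phi)_\# \nu_{t_k}$ for each $k=0,\ldots,N-1$. Chaining these pairwise equalities telescopes to $(T_\phi)_\# \nu_{t_N} = (T_\phi)_\# \nu_{t_0}$. By the construction of the interpolation sequence (\cref{lemma:interpolation} combined with the definitions in Section~A.1), the endpoints are $\nu_{t_0}=\nu_0$, with density $\gamma(\rvz)p(\rc|f_\theta(\rvz))$, and $\nu_{t_N}=\rho$, with density $\gamma(\rvz)p(\rc)$. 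Substituting these identifications I obtain $(T_\phi)_\# \rho = (T_\phi)_\# \nu_0$.

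Next I would anchor the chain to the target measure $\eta$ using the boundary condition. Since $d$ is a genuine metric and the integrand is non-negative, the vanishing boundary loss forces $f_{\theta,\phi}(\rvz,\rc)=f_\theta(\rvz)$ for $\nu_0$-almost every $(\rvz,\rc)$. Hence the maps $T_\phi$ and $T$ agree $\nu_0$-a.e., which gives $(T_\phi)_\# \nu_0 = T_\# \nu_0$, and by Lemma~1 in the appendix the right-hand side equals $\eta$. Combining with the telescoping identity from the previous step yields $(T_\phi)_\# \rho = \eta$, which is the stated conclusion.

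The main obstacle is not analytical depth but careful bookkeeping: one must verify that $\MMD=0$ genuinely implies equality of measures (invoking the characteristic-kernel property, which is a \emph{standing} hypothesis rather than something proved here) and confirm that the two endpoints of the interpolation sequence are precisely $\nu_0$ and $\rho$ as stated in \cref{lemma:interpolation}. Once these two points are secured, the proof is a short stitching together of Lemmas~1--3 with no further analytic estimates.
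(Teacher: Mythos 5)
Your proposal is correct and follows essentially the same route as the paper's proof: each $\MMD^2$ term vanishes individually, the characteristic-kernel property converts this into equality of push-forwards that telescope from $\nu_{t_0}$ to $\nu_{t_N}=\rho$, and the Boundary Loss Lemma anchors $(T_\phi)_\#\nu_{t_0}=\eta$. The only difference is the order in which you invoke the two conditions, which is immaterial.
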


\begin{proof}
Let $\phi$ be a parameter set that satisfies the two conditions stated in the theorem.

The first condition is $\E_{(\rvz, \rc) \sim \nu_{t_0}}[d(f_{\theta, \phi}(\rvz,\rc) , f_{\theta}(\rvz))] = 0$.
    Recall that $\nu_{t_0}$ is the distribution with density $\gamma(\rvz) p(\rc|f_{\theta}(\rvz))$.
    According to the Boundary Loss Lemma (Lemma 2), this zero loss implies that the push-forward measure $(T_{\phi})_{\#}\nu_{t_0}$ is equal to the target distribution $\eta$.
    So, $(T_{\phi})_{\#}\nu_{t_0} = \eta$.

The second condition is $\sum_{k=0}^{N-1} \MMD^2( (T_{\phi})_{\#}\nu_{t_{k+1}}, (T_{\phi})_{\#}\nu_{t_k} ) = 0$.
    Since $\MMD^2(P, Q) \ge 0$ for any probability distributions $P, Q$, for the sum of non-negative terms to be zero, each individual term in the sum must be zero.
    Therefore, for each $k \in \{0, 1, \dots, N-1\}$:
    \[ \MMD^2( (T_{\phi})_{\#}\nu_{t_{k+1}}, (T_{\phi})_{\#}\nu_{t_k} ) = 0 \]
    Assuming MMD is based on a characteristic kernel, $\MMD^2(P, Q) = 0$ if and only if $P=Q$.
    Thus, for each $k \in \{0, 1, \dots, N-1\}$:
    \[ (T_{\phi})_{\#}\nu_{t_{k+1}} = (T_{\phi})_{\#}\nu_{t_k} \]

The result from step 2 implies a chain of equalities for the push-forward measures generated by $T_{\phi}$ from the sequence of input distributions $\nu_{t_k}$:
    \begin{align*} (T_{\phi})_{\#}\nu_{t_N} &= (T_{\phi})_{\#}\nu_{t_{N-1}} \quad  \\ &= (T_{\phi})_{\#}\nu_{t_{N-2}} \quad  \\ &\vdots \\ &= (T_{\phi})_{\#}\nu_{t_1} \quad  \\ &= (T_{\phi})_{\#}\nu_{t_0} \quad  \end{align*}
    So, we have $(T_{\phi})_{\#}\nu_{t_N} = (T_{\phi})_{\#}\nu_{t_0}$.

From the first condition, we established that $(T_{\phi})_{\#}\nu_{t_0} = \eta$.
Substituting this into the equality chain:
    \[ (T_{\phi})_{\#}\nu_{t_N} = \eta \]
From Lemma \ref{lemma:interpolation}, we know that $\nu_{t_N}$ (which corresponds to $p_t(\rvz, \rc)$ at $t=t_N=1$) is the independent latent-condition distribution $\rho$. The density of $\rho$ is $p_{\rho}(\rvz, \rc) = \gamma(\rvz) p(\rc)$.
Substituting $\nu_{t_N} = \rho$:
\[ (T_{\phi})_{\#}\rho = \eta \]

This is the desired conclusion. If $(\rvz, \rc)$ is sampled from $\rho$ (meaning $\rvz \sim \gamma(\cdot)$ independently of $\rc \sim p(\cdot)$) and then transformed by $T_{\phi}$ (i.e., forming $(f_{\theta, \phi}(\rvz, \rc), \rc)$), the resulting distribution is the target data-condition distribution $\eta$ (which has density $p_{\theta}(\rvx)p(\rc|\rvx)$).

\end{proof}
\begin{remark}[Lemma~\ref{lemma:connection} in main paper]
\label{rmk:lm2}
Specifically, when we take $N=1$ (particle number) in MMD loss, and take we have some specific kernel choice:
\begin{itemize}
    \item $k(x,y) = -\|x-y\|^2$, although it is not a proper positive definite kernel required by MMD, we find it works well in practice
    \item $k(x,y) = \rc-\sqrt{\|x-y\|^2+\rc^2}$ is a conditionally positive definite kernel.
\end{itemize}
Then the summed MMD Loss
\[ L_{total}(\phi) = \sum_{k=0}^{N-1} \MMD^2( (T_{\phi})_{\#}\nu_{t_{k+1}}, (T_{\phi})_{\#}\nu_{t_k} ) = 0, \]
can be implemented in a practical way:
\[ L_{total}(\phi) = \sum_{k=0}^{N-1} \mathbb{E}_{\gamma(\rvz) p(\rc|f_{\theta}(\rvz))}\mathbb{E}_{\gamma(\rvw)}d(f_{\theta, \phi}(\alpha_{t_{k+1}}\rvz + \sigma_{t_{k+1}}w ,\rc), f_{\theta, \phi}(\alpha_{t_{k}}\rvz + \sigma_{t_{k}}w,\rc)), \]
where $d$ is $l_2$ loss or pseudo-huber loss, other kernel-induced losses also work.
\end{remark}

\section{Experiment Details}
\label{app:exp}
\textbf{One-step generator} We adopt Diff-Instruct~\cite{luo2023diff} for pre-training the one-step generator. We adopt the AdamW optimizer. The $\beta_1$ is set to be 0, and the $\beta_2$ is set to be 0.95. The learning rate for the generator is $2e-6$, the learning rate for fake score is $1e-5$.  We apply gradient norm clipping with a value of 1.0 for both the generator and fake score. We use batch size of 256.

\textbf{Controllable Generation} We use Contorlnet's architecture~\cite{zhang2023adding} for training. We adopt the AdamW optimizer with $\beta_1=0.9$, $\beta_2=0.95$, and the learning rate of $1e-5$.  We use batch size of 128. 

\textbf{Image-prompted Geneartion} We use IP-adapter's architecture~\cite{ye2023ip-adapter} for training. We adopt the AdamW optimizer with $\beta_1=0.9$, $\beta_2=0.95$, and the learning rate of $1e-4$. We use batch size of 128. We use a probability of 0.05 to drop text during training.

\newpage

\end{document}